\theoremstyle{plain}
\newtheorem{theorem}{Theorem}
\newtheorem{lemma}{Lemma}
\theoremstyle{definition}
\theoremstyle{remark}
\newcommand{\Real}{\mathbb{R}}
\DeclareMathOperator{\tr}{tr}
\DeclareMathOperator{\VEC}{vec}
\DeclareMathOperator{\sign}{sign}
\DeclareMathOperator{\Sp}{\mathbb{S}}
\DeclareMathOperator{\Cs}{\mathcal{C}}
\DeclareMathOperator{\card}{card}
\DeclareMathOperator{\Sft}{Sft}
\newcommand{\argmin}{\qopname\relax m{arg\,min}}
\begin{document}

\def\spacingset#1{\renewcommand{\baselinestretch}%
{#1}\small\normalsize} \spacingset{1}

\begin{center}
\LARGE \bf Estimating Multiple Precision Matrices with Cluster Fusion Regularization
\end{center}
\begin{center}
Bradley S. \textsc{Price}, Aaron J. \textsc{Molstad},
and Ben \textsc{Sherwood}
\end{center}

\newcommand{\foo}{\thefootnote}
\renewcommand{\thefootnote}{\relax}
\footnotetext{Bradley S. Price, Management Information Systems Department, West Virginia University
    (E-Mail: \emph{brad.price@mail.wvu.edu)}.
    Aaron J. Molstad, Department of Statistics, University of Florida
    (E-mail: \emph{amolstad@ufl.edu}).
    Ben Sherwood, School of Business, University of Kansas
    (E-mail: \emph{ben.sherwood@ku.edu}).}

%\begin{center}
%\LARGE \bf Estimating Multiple Precision Matrices via Cluster Fusion Regularization 
%\end{center}
%\begin{center}
%Bradley S. \textsc{Price}, Aaron J. \textsc{Molstad},
%and Ben \textsc{Sherwood}
%\end{center}
%
%\newcommand{\foo}{\thefootnote}
%\renewcommand{\thefootnote}{\relax}
%\footnotetext{Bradley S. Price, Management Information Systems Department, West Virginia University
%    (E-Mail: \emph{brad.price@mail.wvu.edu)}.
%    Aaron J. Molstad, Fred Hutchinson Cancer Research Center
%    (E-mail: \emph{amolstad@fredhutch.org}).
%    Ben Sherwood, School of Business, University of Kansas
%    (E-mail: \emph{ben.sherwood@ku.edu}).}
%\renewcommand{\thefootnote}{\fred}

\begin{abstract}
We propose a penalized likelihood framework for estimating multiple precision matrices from different classes. Most existing methods either incorporate no information on relationships between the precision matrices, or require this information be known \emph{a priori}. The framework proposed in this article allows for simultaneous estimation of the precision matrices and relationships between the precision matrices, jointly. Sparse and non-sparse estimators are proposed, both of which  require solving a non-convex optimization problem. %are solved using an iterative blockwise coordinate descent algorithm. 
% The algorithm iterates between estimating the precision matrices given the groups and estimating the clusters given the precision matrices. 
%To compute our proposed estimators, we use blockwise coordinate descent, alternating between a convex optimization problem and a $k-$means clustering problem. 
To compute our proposed estimators, we use an iterative algorithm which alternates between a convex optimization problem solved by blockwise coordinate descent and a $k$-means clustering problem.
Blockwise updates for computing the sparse estimator require solving an elastic net penalized precision matrix estimation problem, which we solve using a proximal gradient descent algorithm. We prove that this subalgorithm has a linear rate of convergence. In simulation studies and two real data applications, we show that our method can outperform competitors that ignore relevant relationships between precision matrices and performs similarly to methods which use prior information often uknown in practice.

\par
\medskip
\noindent \textbf{Key Words:} 
Discriminant Analysis; Gaussian Graphical Models;  Fusion Penalties; Precision Matrix Estimation.
\end{abstract}

% -- Aaron starting edits here 
\doublespace
\section{Introduction}
Many applications in statistics and machine learning require the estimation of multiple, possibly related, precision matrices. For example, to perform classification using quadratic discriminant analysis (QDA), a practitioner must estimate two or more precision matrices, e.g., see Chapter 4 of \citet{friedman2001elements}. %\bfblue{(I don't think we need to cite any paper for the previous sentence. Brad mentioned being worried about citing this work too much. I think this would be one reference that could be deleted.)} 
Similarly, it is often of scientific interest to estimate multiple graphical models when the same variables are measured on subjects from multiple classes, e.g., \citet{JointEstimationMGM}.

%\bfred{(Below adding a 0 to the subscript in the data generating model, so that it isn't confused with the optimization variable in the methods.)}

In this work, the data $(x_1,y_1),\ldots, (x_n,y_n)$ are assumed to be a realization of $n$ independent copies of the random pair $(X,Y)$ such that 
$Y$ has support  $\Cs=\{1,\ldots, C\}$ and 
$$(X \vert Y=c) \sim {\rm N}_p\left(\mu_{*c},\Omega_{*c}^{-1}\right), \quad c \in \Cs,$$ where $\mu_{*c} \in \mathbb{R}^p$ and $\Omega_{*c} \in \mathbb{S}^{p}_+$ are unknown, and $\mathbb{S}^{p}_+$ denotes the set of $p \times p$ symmetric, positive definite matrices.  Let $n_c=\sum_{i=1}^n 1(y_i=c)$ be the sample size
for the $c$th class, 
let $\bar x_c= n_{c}^{-1} \sum_{i=1}^n x_i 1(y_i=c)$ be the observed sample mean for the $c$th class, and let
$$
S_c=\frac{1}{n_c}\sum_{i=1}^n (x_i-\bar{x}_c)(x_i-\bar{x}_c)^T 1(y_i=c), \qquad c\in\Cs,
$$
be the sample covariance matrix for the $c$th class. Define ${\bf{\Omega}}=\left\{\Omega_1,\ldots,\Omega_C\right\}$. After profiling over the means and class probabilities, negative two times the log-likelihood is
\begin{equation} \label{eq:gee}
g({\bf{\Omega}}) = \sum_{c\in \Cs} n_c\{\tr(S_c\Omega_c)-\log\det(\Omega_c)\}.
\end{equation} 
A natural estimator of $\Omega_{*c}$, when it exists, is the maximum likelihood estimator $S_c^{-1}$. %However, %when $p$ is large relative to $n_c$, the maximum likelihood estimator may perform poorly . In addition, 
%when $p > n_c$, the maximum likelihood estimator does not exist.  %Many methods exist for estimating precision matrices in high-dimensional settings.  A review of early approaches of estimating a single regularized covariance estimation can be found in \citet{Pour11}.  
In settings where the maximum likelihood estimator does not exist, e.g., when $p > n_c$, a practitioner could instead estimate the $\Omega_{*c}$'s separately using penalized normal maximum likelihood \citep{Pour11,fan2016overview}. %See \citet{Pour11} or \citet{fan2016overview} for a review of methods for estimating large precision and covariance matrices. 
Sparsity inducing penalties are especially popular in penalized normal maximum likelihood \citep{yuan_07,dAspermont_08,glasso,SPICE,NEWGlasso} because a zero in the $(j,k)$th entry of $\Omega_{*c}$ implies the conditional independence of the $j$th and $k$th variables in the $c$th class. However when the precision matrices are similar across classes, e.g., when the $\Omega_{*c}$'s share sparsity patterns, jointly estimating the $\Omega_{*c}$'s can be more efficient than methods that estimate each precision matrix separately.
% However, estimating the $\Omega_{*c}$'s separately may be less effective \textcolor{orange}{(efficient?)} than a procedure which estimates all $\Omega_{*c}$ jointly when the precision matrices are similar across classes, e.g., shared sparsity patterns. 

Many methods exist for estimating multiple precision matrices under the assumption of shared sparsity patterns across classes. For example, \citet{JointEstimationMGM} proposed a hierarchical penalty which encourages zeroes in the same entries of estimates of the $\Omega_{*c}$'s.  Similarly, \citet{danaher_2014} proposed the fused graphical lasso estimator (FGL)
$$ \argmin_{\Omega_c \in \mathbb{S}^p_+, c \in \Cs} \left\{ g({\bf{\Omega}}) + \lambda_1 \sum_{c \in \Cs}\|\Omega_c\|_1 + \lambda _2 \hspace{-8pt} \sum_{(j,k) \in \Cs \times \Cs}\hspace{-8pt}\|\Omega_j - \Omega_k\|_1 \right\},$$
where $\| A \|_1=\sum_{j=1}^p\sum_{k=1}^p |A_{j,k}|$. The first FGL penalty, controlled by positive tuning parameter $\lambda_1$, promotes elementwise sparsity separately \textit{within} classes. The second penalty, controlled by $\lambda_2$, promotes elementwise equality jointly \textit{across} classes. For sufficiently large values of the tuning parameter $\lambda_2$, the FGL estimates of the $\Omega_{*c}$'s will have exactly equivalent sparsity patterns. \citet{price_2015} proposed a computationally efficient alternative to FGL, called \textit{ridge fusion} (RF), which used squared Frobenius norm penalties in place of the $L_1$ norm penalties in FGL.  \citet{price_2015} also investigated FGL and RF as methods for fitting the QDA model. These approaches for fitting the QDA model are related to \citet{friedman_1989}, who proposed the regularized discriminant analysis (RDA) estimator of the precision matrices. The RDA approach estimates multiple precision matrices for QDA using a linear combination of the sample covariance matrices for each class and the pooled sample covariance matrix across all the classes.  \citet{bilgrau_2015} generalized the work of \citet{price_2015} to estimate multiple precision matrices sharing a common target matrix.  %\citet{lee_2015} proposed a method that can exploit common structures to estimate multiple precision matrices and proposed an ADMM algorithm, where the major step requires a solving a linear programming problem. 

Joint estimation procedures such as those proposed by \citet{danaher_2014} and \citet{price_2015} can perform well when all $C$ precision matrices are similar. However, there are settings when FGL and RF may perform unnecessary or inappropriate shrinkage. 
Notice, the second part of the ridge fusion penalty proposed by \citet{price_2015} can be rewritten as
$$
\sum_{(j,k) \in \Cs \times \Cs} \|\Omega_j-\Omega_k\|_F^2=C\sum_{l \in \Cs} \|\Omega_l-\bar{\Omega}\|_F^2, \quad \text{ where } \quad \bar{\Omega}=\frac{1}{C}\sum_{c \in \Cs} \Omega_c
$$
and $\|A\|_F^2 = {\rm tr}(A'A)$ is the squared Frobenius norm. This formulation suggests that FGL and RF can be viewed as shrinking all precision matrices towards a common precision matrix. Regularization of this type may be problematic if there are substantial differences in the population precision matrices across classes. For instance, consider the case that there are two \textit{groupings} (i.e., clusters) of the $C$ classes denoted $D_1$ and $D_2$, where $D_1 \cap D_2$ is empty and $D_1 \cup D_2 = \Cs$. Suppose the $(j,k)$ entry of $\Omega_{*c}$, $\left[\Omega_{*c}\right]_{j,k} = 0$ for $c \in D_1$, but $\left[\Omega_{c}\right]_{j,k} \neq 0$ for $c \in D_2$ for all $j \neq k$. This type of scenario may occur, for example, when the variables are the expression of $p$ genes belonging to some pathway and the classes represent certain disease subtypes. Two subtypes may have similar gene-gene dependence, which are distinct from the another subtype (e.g., controls). In these settings, FGL may perform poorly since sparsity patterns are only shared within a subset of classes.  If such clusters were known \emph{a priori}, it may be preferable to apply FGL or RF to the groups separately, but when groupings are unknown, they must be estimated from the data. 

Methods such as those proposed by \citet{zhu2014} and \citet{saegusa2016} address this issue of FGL and RF. The structural pursuit method proposed by \citet{zhu2014} allows for heterogenity in precision matrices using the truncated lasso penalty of \citet{shen12} to promote elementwise equality and shared sparsity patterns across pre-defined groups of precision matrices.  The method proposed by \citet{saegusa2016}, known as LASICH, allows for heterogeneity of precision matrices through the use of a graph Laplacian penalty which incorporates prior information about how different classes' sparsity patterns are related. Since such prior information is often not available in practice, the authors propose the HC-LASICH method: a two-step procedure which first uses hierarchical clustering to estimate relationship between precision matrices, then uses this estimate to apply the LASICH procedure.  In somewhat related work, \citet{Ma_2016} proposed the joint structural estimation method to use prior information on shared sparsity patterns in a two step procedure that first estimates the shared sparsity pattern and then estimates the precision matrices based on the shared sparsity constraints.  More recently, \citet{Jalali_19} extended the work of \citet{Ma_2016} to the case where prior information on edge relationships not need be known.  This is done using a Bayesian approach that incorporates a multivariate Gaussian mixture distribution on all possible sparsity patterns.

In this article, we propose a penalized likelihood framework for simultaneously estimating the $C$ precision matrices {and how the precision matrices relate to one another}. Like FGL and RF, our method can exploit the similarity of precision matrices belonging to a group, but avoids the unnecessary shrinkage of FGL or RF when groups differ. Unlike LASICH, the proposed method does not require any prior information about the relationships between the classes, nor does it require clustering to take place before estimation of the precision matrices. 
% --- don't think this next sentence is necessary 
%In contrast to the previously discussed methods which require prior knowledge or estimation of group structures.
We study the use of our estimator for application in quadratic discriminant analysis and Gaussian graphical modeling in settings where there are groupings of classes which share common dependence structures.  Computing our estimator is nontrivial since the penalized objective function we minimize is discontinuous. To overcome this challenge, we propose an iterative algorithm, in which we alternate between updating groupings and updating precision matrix estimates.  As part of our algorithm for the sparse estimator we propose (see Section 2), we must solve an elastic net penalized precision matrix estimation problem. To do so, we propose a graphical elastic net iterative shrinkage thresholding algorithm (GEN-ISTA). We prove this GEN-ISTA has a linear convergence rate and characterize the set to which the solution belongs. 

\section{Joint estimation with cluster fusion penalties}

\subsection{Methods}
%\bfblue{This should be retitled, correct? As we are no longer just focusing on discriminant analysis.}\bfred{Open do another title}

%\subsection{Method}
Define $(D_1,\ldots, D_Q)$ to be an unknown $Q$ element partition of the set $\Cs$. For convenience, we will refer to $D_q$ as the $q$th cluster.  Let $\lambda_1 > 0$, $\lambda_2 \geq 0$, and the positive integer $Q$ be user defined tuning parameters.

For any set $B$ define $\card{(B)}$ as the cardinality of $B$. The first estimator we will investigate is the \emph{cluster ridge fusion estimator} (CRF), which is defined as
\begin{equation}
\label{crf}
({\bf{\hat{\Omega}}_{CRF}},\hat{D}) = \argmin_{\Omega_c \in \Sp^p_+, c \in \Cs, D_1,\ldots,D_Q} \left\{ g({\bf{\Omega}}) + \frac{\lambda_1}{2}\sum_{c\in\Cs} \|\Omega_c\|_F^2
 +\frac{\lambda_2}{2}\sum_{q=1}^Q\frac{1}{\card{(D_q)}} \sum_{c,m\in D_q} \|\Omega_c -\Omega_{m}\|_F^2\right\}.
 \end{equation} 
%\bfo{Why are we use $|\cdot|_2$ instead of Frobenius norm? It is defined but its strange in this context since these are all matrices.  \bfred{The original Ridge Fusion paper wasn't written as the Frobenous but the vectorized two-norm, which I know is the Forbenious norm.  Here it's because that K-means is written as the two norm squared so it's obvious that is what we are using}.  Also, the notation is a little strange -- for instance, why do we list the set $ c \in \Cs$ in the argmin?  If might be better to call $\mathbf{\Omega} = (\Omega_1, \dots, \Omega_C) \in \mathbb{S}_+^p \times \dots \times  \mathbb{S}_+^p$ and call $\mathcal{G} =  \mathbb{S}_+^p \times \dots \times  \mathbb{S}_+^p$.}  \bfred{The only reason I use the $c \in \Cs$ notation here is so that when we rewrite the objective function for separability we can specify $c \in D_q$ or whatever we want.  Using the $\mathcal{G}$ notation you have we'd have to control the dimension and the specified } \bfred{Changed for Forbenius Norm}
We refer to the penalty associated with $\lambda_2$ as the \textit{cluster fusion penalty}, which promotes similarities in precision matrices that are
in the same cluster. The ridge fusion method (RF) proposed by \citet{price_2015} can be viewed as a special case of \eqref{crf} when $Q=1$.

We also propose a sparsity inducing version of the estimator, the \emph{precision cluster elastic net} (PCEN), which is defined as
\begin{equation}
\label{pcen}
({\bf{\hat{\Omega}}_{PCEN}},\tilde{D})=\argmin_{\Omega_c \in \Sp^p_+, c \in \Cs, D_1,\ldots,D_Q} \left\{ g({\bf{\Omega}}) + \lambda_1\sum_{c\in\Cs} \|\Omega_c\|_1
 +\frac{\lambda_2}{2}\sum_{q=1}^Q\frac{1}{\card{(D_q)}} \sum_{c,m\in D_q} \|\Omega_c -\Omega_{m}\|_F^2\right\}.
 \end{equation} 
When $\lambda_2 = 0$, ${\bf{\hat{\Omega}}_{PCEN}}$ is equivalent to estimating the $C$ precision matrices separately with $L_1$ penalized normal maximum likelihood using the same tuning parameter for each matrix. 
%\bfo{Here you use the different notation under the argmin, but should probably lose the $c \in \Cs$. }\bfred{See comment above on why I like $c \in \Cs$ for this.  I think it sets up for later.  If we were going to leave it written in non-separable form for the entire paper I agree we go with what you're saying, but if we are going to use the separability at any point I think this allows us the notation earlier to do it.  If you've got a way to do it keeping that in-tact then feel free to implement it.} 
In our proposed estimators, the cluster fusion penalty is used to promote similarity in precision matrices that are in the same cluster, while estimating precision matrices in different clusters separately from one another. When estimating Gaussian graphical models, PCEN promotes elementwise similarity between precision matrices in the same cluster, in turn promoting similar sparsity patterns within the same cluster.  This differs from other methods, e.g., FGL proposed by \cite{danaher_2014}, which penalize the absolute value of entrywise differences across all precision matrices. 

Unlike the FGL fusion penalty, the squared Frobenius norm fusion penalty will not lead to exact entrywise equality between estimated precision matrices -- even those belonging to the same cluster. However, the Frobenius norm penalty facilitates fast computation and more importantly, an efficient search for clusters using existing algorithms for $k$-means clustering. 

Cluster fusion regularization was first proposed in the context in univariate response linear regression by \cite{witten_2014} to detect and promote similarity in effect sizes.  More recently \cite{price_sherwood} used cluster fusion regularization in multivariate response linear regression to detect and promote similarity in fitted values. %\textcolor{orange}{Clustering has also been used in precision matrix estimation by \citet{tan_15} to identify subgroups of networks in Gaussian graphical models and \citet{Hsieh12} to increase the speed of sparse precision matrix estimation. }\textcolor{red}{These two methods are not really related to the cluster fusion stuff, at least not the second reference. Also, not sure I want Kean-Ming as reviewer - hes harsh. }
 % \bfred{More recently, \citet{ yao19} proposed the clustered Gaussian graphical model which used convex clustering to identify clusters in a single Gaussian graphical model.}

If $D_1,\ldots,D_Q$ were known, then \eqref{crf} and \eqref{pcen}
could be rewritten as 
\begin{eqnarray}
\label{crf_dfix}
{\bf{\tilde{\Omega}}_{CRF}}=\argmin_{\Omega_c \in \Sp^p_+, c \in \Cs,} \left\{ g({\bf{\Omega}}) +\frac{ \lambda_1}{2}\sum_{c\in\Cs} \|\Omega_c\|_F^2
 +\frac{\lambda_2}{2}\sum_{q=1}^Q\frac{1}{\card{(D_q)}} \sum_{c,m\in D_q} \|\Omega_c -\Omega_{m}\|_F^2\right\},\\
\label{pcen_dfixed}
{\bf{\tilde{\Omega}}_{PCEN}}=\argmin_{\Omega_c \in \Sp^p_+, c \in \Cs} \left\{ g({\bf{\Omega}}) + \lambda_1\sum_{c\in\Cs} \|\Omega_c\|_1
 +\frac{\lambda_2}{2}\sum_{q=1}^Q\frac{1}{\card{(D_q)}} \sum_{c,m\in D_q} \|\Omega_c -\Omega_{m}\|_F^2\right\}.
\end{eqnarray}
The optimization in \eqref{crf_dfix} can be identified as $Q$ separate ridge fusion estimation problems \citep{price_2015}.  The optimization in \eqref{pcen_dfixed} is also separable over the $Q$ clusters, and in Section \ref{pcen_alg} we propose a block coordinate descent algorithm to solve \eqref{pcen_dfixed}. 

In the Supplementary Material, we describe a validation likelihood based approach to select the tuning parameters for use in CRF and PCEN; and further discuss a more computationally efficient information criterion which can be used for tuning parameter selection for PCEN. % as a solution. %\bfblue{Do we want to give these names, too? Are we implementing these procedures later? I replaced the hats with tildes because they are not the same estimators as the ones proposed earlier.}\bfred{I'm not sure if we want to name these, but this is what we solve in the algorithm. If you can see how to write it in a way that allows us to just say that I'm fine with it. }

\section{Computation}

\subsection{Overview}
The objective functions in \eqref{crf} and \eqref{pcen} are not continuous and non-convex with respect to $D_1,\ldots,D_Q$ because changing cluster membership results in discrete changes in the objective function. However, in \eqref{crf_dfix} and \eqref{pcen_dfixed}, $D_1,\ldots,D_Q$ are fixed so that the objective functions are strictly convex and convex, respectively, with respect to $\bf{\Omega}$. %assuming the tuning parameters, $\lambda_1$ and $\lambda_2$, are positive. 
To compute both CRF and PCEN, we propose an algorithm that iterates between solving for $D_1,\ldots,D_Q$ with $\bf{\Omega}$ fixed, and then solving for $\bf{\Omega}$ with $D_1,\ldots,D_Q$ fixed.  This procedure is similar to those of \citet{witten_2014} and \citet{price_sherwood}. We describe both algorithms in the following subsections. 

%% Start here
\subsection{Cluster ridge fusion algorithm}
\label{sec:crfa}
Assume $\lambda_1 >0$, $\lambda_2 \geq 0$, and $Q \in \mathbb{Z}_+$ are fixed where $\mathbb{Z}_+$ denotes the set of positive integers.  We propose the following iterative algorithm to solve \eqref{crf}: 

\begin{enumerate}
\item Initialize ${\bf{\tilde{\Omega}}_{CRF}}^1=\{\tilde{\Omega}_1^{1},\ldots, \tilde{\Omega}_C^1\}$ as a set of diagonal matrices where the $j$th diagonal element of $\tilde{\Omega}_k^{1}$ is $(S_{k_{jj}})^{-1}$. %\bfblue{How?} 
\item For the $w$th step where $w>1$ repeat the steps below until the estimates for $\hat{D}^{w-1}_1,\ldots,D^{w-1}_Q$ are equivalent to $\hat{D}^w_1,\ldots, \hat{D}^w_Q$.  
\begin{enumerate}
\item Holding $\bf{\tilde{\Omega}}^{w-1}$ fixed, obtain the $w$th iterate of $(\hat{D}_1,\ldots,\hat{D}_Q)$ with
\begin{equation}
\label{rf_clust}
(\hat{D}_1^{w},\ldots,\hat{D}_Q^{w})=\argmin_{D_1,\ldots,D_Q}  \left\{ \sum_{q=1}^Q\frac{1}{\card{(D_q)}} \sum_{c,m\in D_q} \|\tilde{\Omega}^{w-1}_c -\tilde{\Omega}^{w-1}_{m}\|_F^2\right\}.
\end{equation}

This is equivalent to solving the well studied $k$-means clustering optimization problem on $C$ $p^2$ vectors \citep{witten_2014}. %\textcolor{red}{why not using vec-half here -- p(p-1)/2?} \bfred{penalty is on the entire matrix not just the half vec}.  

\item Holding $(\hat{D}_1^{w},\ldots,\hat{D}_Q^{w})$ fixed at the $w$th iterate, obtain the $w$th iterate of the precision matrices with
\begin{equation}
\label{rf-update}
{\bf{\tilde{\Omega}}_{CRF}}^w=\argmin_{\Omega_c \in \Sp^p_+, c \in \Cs} \left\{ g({\bf{\Omega}}) + \frac{\lambda_1}{2}\sum_{c\in\Cs} \|\Omega_c\|_F^2
 +\frac{\lambda_2}{2}\sum_{q=1}^Q\frac{1}{\card{(\hat{D}^w_q)}} \sum_{c,m\in \hat{D}^w_q} \|\Omega_c -\Omega_{m}\|_F^2\right\}.
\end{equation} 
This is identical to the optimization in \eqref{crf_dfix} and can be solved with $Q$ parallel RF estimation problems, with the $q$th objective function taking the form 

$$
\sum_{c \in D_q} \left\{ n_c\left(\tr(S_c\Omega_c) -\log\det(\Omega_c)\right) +\frac{\lambda_1}{2} \|\Omega_c\|_F^2\right\} +\frac{\lambda_2}{2\card{(D_q)}}\sum_{c,m \in D_q} \|\Omega_l-\Omega_m\|_F^2.  
$$
\end{enumerate}
\end{enumerate}

%The update in \eqref{rf_clust} is the well studied K-means problem on $C$ $p^2$ vectors.  
%Our algorithm is similar to that proposed by \citet{price_sherwood} who solved a similar problem in the context of multivariate response regression using a blockwise descent algorithm. %who showed that, in multivariate regression% problems that are considered embarrassingly parallel without regularization, the cluster fusion penalty allows for a level of separability to be maintained. 

To protect against $k$-means clustering update in 2(a) from selecting a local optima, our implementation uses 100 random starts, and selects the clustering which gives the lowest objective function value \citep{hartigan_1979, krishna_1999}.  

%(\textcolor{orange}{There needs to be some discussion of (1) the k-means step and (2) how do you determine when the entire algorithm has converged? . For example: "In our implementation, we try 100 random initial values for K-means and take the clustering which gives the lowest objective function value. We terminate (7) when XXX."} \bfred{ We do it in the algorithm statement when in (2) we say we iterate until the clustering doesn't change. }

\subsection{Precision cluster elastic net algorithm}
\label{pcen_alg}
%To obtain the estimates 
For the PCEN estimator, we propose to use the same iterative procedure as in Section \ref{sec:crfa}.  The algorithm iterates between
a $k$-means clustering algorithm to obtain the estimated clusters and a blockwise coordinate descent algorithm which uses the \textit{graphical elastic net iterative thresholding algorithm} (GEN-ISTA) to obtain new iterates of the precision matrices at each iteration.  %\bfblue{Seems odd that we are saying the algorithm is newly developed and then cite Rolfs paper. Maybe we should be saying something about how it builds on results from that paper?} 

% \bfblue{(From our discussion in the abstract I believe this should be graphical, not generalized.)}\bfred{Fixed} \bfblue{should it be G-ISTA now? I point out a paper that uses this phrase for another algorithm. I think we should go with a new name to avoid confusion. Like GEN-ISTA (Graphical Elastic Net ISTA)}  \bfred{Let's try the GEN-ISTA thing} 

Again, let $\lambda_1>0$, $\lambda_2 \geq 0$ and $Q \in \mathbb{Z}_+$ be fixed. Formally, the iterative algorithm is as follows: 

\begin{enumerate}
\item Initialize ${\bf{\tilde{\Omega}_{PCEN}}}^1=\{\tilde{\Omega}_1^{1},\ldots, \tilde{\Omega}_C^1\}$ as a set of diagonal matrices where the $j$th diagonal element  of  $\tilde{\Omega}_k^{1}$ is $(S_{k_{jj}})^{-1}$. %\bfblue{How?} 
\item For the $w$th step where $w>1$ repeat the steps below until the estimates for $\tilde{D}^{w-1}_1,\ldots,\tilde{D}^{w-1}_Q$ are equivalent to $\tilde{D}^w_1,\ldots, \tilde{D}^w_Q$.  
\begin{enumerate}
\item Holding $\bf{\tilde{\Omega}_{PCEN}}^{w-1}$ fixed, obtain the $w$th iterate for $\tilde{D}$ with
\begin{equation}
\label{pcen_clust}
(\tilde{D}_1^{w},\ldots,\tilde{D}_Q^{w})=\argmin_{D_1,\ldots,D_Q}  \sum_{q=1}^Q\frac{1}{\card{(D_q)}} \sum_{c,m\in D_q} \|\tilde{\Omega}^{w-1}_c -\tilde{\Omega}^{w-1}_{m}\|_F^2.
\end{equation}

\item Holding $(\tilde{D}_1^{w},\ldots,\tilde{D}_Q^{w})$ fixed at the $w$th iterate, obtain the $w$th iterate of the precision matrix estimates with

\begin{equation}
\label{pcen-update}
{\bf{\tilde{\Omega}}_{PCEN}}^w=\argmin_{\Omega_c \in \Sp^p_+, c \in \Cs} \left\{ g({\bf{\Omega}}) + \lambda_1\sum_{c\in\Cs} \|\Omega_c\|_1
 +\frac{\lambda_2}{2}\sum_{q=1}^Q\frac{1}{\card{(\tilde{D}^w_q)}} \sum_{c,m\in \tilde{D}^w_q} \|\Omega_c -\Omega_{m}\|_F^2\right\}.
\end{equation} 
\end{enumerate}
\end{enumerate}

Just as in the CRF Algorithm, to protect against selecting a local optima in the $k$-means clustering update in 2(a), our implementation uses 100 random starts, and selects the clustering which gives the lowest objective function value \citep{hartigan_1979, krishna_1999}. 

The update in  \eqref{pcen-update} is a non-trivial convex optimization problem.  As noted previously, \eqref{pcen-update} can be separated into $Q$ separate optimization problems, where the $q$th optimization can be written as 
\begin{equation}
\label{pcen_single}
\argmin_{\Omega_c \in \Sp_p^+, c \in \Cs} \left( \left[ \sum_{c \in D_q} n_c\{\tr(S_c\Omega_c)-\log\det(\Omega_c)\} \right] +\lambda_1\sum_{c\in D_q} \|\Omega_c\|_1
 +\frac{\lambda_2}{2\card{(\tilde{D}^w_q)}} \sum_{c,m\in \tilde{D}^w_q} \|\Omega_c -\Omega_{m}\|_F^2 \right).
\end{equation}

Since the $D_q$'s  are fixed, we propose to solve \eqref{pcen_single} using blockwise coordinate descent where each $\Omega_c$ is treated as a block.  That is, for each $D_q$, we update one $\Omega_c$ for $c \in D_q$ with all other $\Omega_{c'}$ for $c' \in D_q$ held fixed. The objective function for the $\Omega_c$ blockwise update, treating all other $\Omega_{c'}$, $c' \neq c$, $c, c' \in D_q$ as fixed is
%\begin{equation}
%\label{block_cd}
 %n_c \left[{\rm tr}\left(\left\{S_c - \frac{\lambda_2}{{2\rm n_c card}(D^w_q)}\left( \sum_{c' \neq c} \Omega_{c'} \right) \right\}  \Omega_c\right) - \log {\rm det}(\Omega_c) \right] + \lambda_1 |\Omega_c|_1 + \frac{\lambda_2({\rm card}(D^w_q) - 1)}{2{\rm card}(D^w_q)}|\Omega_c|_2^2.
%\end{equation}
%\bfblue{Again, I think $\lambda_2$ constants could be an issue here, probably wise to double check it in the whole paper.}\bfo{Remove 2 in the denomimator of the trace expression -- the cluster penalty wrt $\Omega_c is$:
%$$ \frac{\lambda_2}{2 card}\sum_{d \neq c, (c,d) \in D_q} | \Omega_c - \Omega_d |_2^2 \propto  \frac{\lambda_2}{2 card}\sum_{d \neq c, (c,d) \in D_q}\left[ - {\rm tr}(2\Omega_c'\Omega_d) + {\rm tr}(\Omega_c'\Omega_c)\right],$$
%so the 2 cancels. Still need the 2 in the ridge penalty for $\Omega_c:$
\begin{equation}
\label{block_cd}
 n_c \left({\rm tr}\left[\left\{S_c - \frac{\lambda_2}{n_c {\rm card}(D^w_q)}\left( \sum_{c' \neq c} \Omega_{c'} \right) \right\}  \Omega_c\right] - \log {\rm det}(\Omega_c) \right) + \lambda_1 \|\Omega_c\|_1 + \frac{\lambda_2({\rm card}(D^w_q) - 1)}{2{\rm card}(D^w_q)}\|\Omega_c\|_F^2.
\end{equation}
%}

Define 
$$\tilde{S}_c=\left\{S_c - \frac{\lambda_2}{{\rm n_c card}(D^w_q)}\left( \sum_{c' \neq c} \Omega_{c'}\right)  \right\}, \quad 
\gamma_{c1}=\frac{\lambda_1}{n_c},\quad \gamma_{c2}= \frac{\lambda_2({\rm card}(D^w_q) - 1)}{2n_c{\rm card}(D^w_q)},$$
so that the argument minimizing \eqref{block_cd} can be expressed 
\begin{equation}
\label{reform}
\argmin_{\Omega_c \in \Sp_p^+}  \left\{ {\rm tr}(\tilde{S}_c \Omega_c) - \log\det( \Omega_c) + \gamma_{c1} \|\Omega_c\|_1 + \gamma_{c2} \|\Omega_c\|_F^2 \right\},
\end{equation}
which can be recognized as the elastic net penalized normal likelihood precision matrix estimator. To compute \eqref{reform}, we propose the GEN-ISTA, an elastic net variation of the algorithm proposed by \citet{rolfs12}, called G-ISTA, which was used to solve problems like \eqref{reform} with $\gamma_{c2}=0$.  Iterative shrinkage thresholding algorithms (ISTA), are a special case of the proximal gradient method, which are commonly used to solve penalized likelihood optimization problems.  We refer the reader to \citet{beck09} and \citet{polson2015proximal} for more on iterative shrinkage thresholding algorithms and proximal algorithms, respectively.

Our approach uses a first order Taylor expansion to derive a majorizing function of the objective in \eqref{reform}.   Let
$$ f(\Omega_c) \equiv  {\rm tr}(\tilde{S}_c \Omega_c ) - \log\det( \Omega_c) + \gamma_{c2}\|\Omega_c\|_F^2$$ 
and let $\tilde{\Omega}_c$ be the previous iterate of $\Omega_c$. Because $\nabla f$ is Lipschitz over compact sets of $\mathbb{S}_+^p$, we have that
\begin{equation}\label{eq:majorizer1}
f(\Omega_c) \leq f(\tilde{\Omega}_c) + {\rm tr}[(\Omega_c - \tilde{\Omega}_c)'\nabla f(\tilde{\Omega}_c) ] + \frac{1}{2t}\|\Omega_c - \tilde{\Omega}_c\|_F^2,
\end{equation}
for sufficiently small step size $t$, with equality when $\Omega_c = \tilde{\Omega}_c$. Thus, we can majorize $f$ with the right hand side of \eqref{eq:majorizer1}: using this inequality and the fact that $\nabla f(\Omega_c) = \tilde{S}_c - \Omega_c^{-1} + 2 \gamma_{2c} \Omega_c $, we have
\begin{equation} 
\label{majorize}
 f(\Omega_c)  \leq - \log {\rm det}(\tilde{\Omega}_c)  + {\rm tr}[\tilde{\Omega}_c(\tilde{S}_c+\gamma_{c2}\tilde{\Omega}_c)] + {\rm tr}[(\Omega_c - \tilde{\Omega}_c)'(\tilde{S}_c - \tilde{\Omega}_c^{-1}+2\gamma_{c2}\tilde{\Omega}_c) ] + \frac{1}{2t}\|\Omega_c - \tilde{\Omega}_c \|_F^2.
 \end{equation}
 Letting $g_t(\Omega_c; \tilde{\Omega}_c)$ denote the right hand side of \eqref{majorize}, it follows that for sufficiently small $t$, 
 \begin{equation}  \label{eq:majorizer2}
 f(\Omega_c) + \gamma_{c1}\|\Omega_c\|_1 \leq g_t(\Omega_c; \tilde{\Omega}_c) + \gamma_{c1}\|\Omega_c\|_1,
 \end{equation}
so that at $\tilde{\Omega}_c$, the right hand side of \eqref{eq:majorizer2} is a majorizer of \eqref{reform}. Thus, to solve \eqref{reform}, we use an iterative procedure: given the previous iterate $\tilde{\Omega}_c$, we construct $g_t(\Omega_c; \tilde{\Omega}_c)$, then we minimize $g_t(\Omega_c; \tilde{\Omega}_c) + \gamma_{c1}\|\Omega_c\|_1$ to obtain the new iterate. This choice of majorizer is convenient since the new optimization problem simplifies to the proximal operator for the $L_1$ norm since
$$ \argmin_{\Omega_c \in \mathbb{S}^p}\left\{ g_t(\Omega_c; \tilde{\Omega}_c) + \gamma_{c1}\|\Omega\|_1  \right\} = \argmin_{\Omega_c \in \mathbb{S}^p} \left\{ \frac{1}{2}\|\Omega_c - Z_{c,t}\|_F^2 + t \gamma_{c1}\|\Omega_c\|_1 \right\},$$
where $Z_{c,t} = t(\tilde{S}_c - \tilde{\Omega}^{-1}_c+2\gamma_{c2}\tilde{\Omega}_c)$ and $\mathbb{S}^p$ denotes the set of $p \times p$ symmetric matrices. In the following subsection, we will show that there always exists a step size such that the solution to the proximal operator above is positive definite, and hence, the iterates remain feasible for \eqref{reform}. 
% The GEN-ISTA algorithm relies on the proximity operator of  $\|\Omega_c\|_1$, which we denote as 

% $$
% \prox(X)=\argmin_{\Omega_c\in \Sp_+^p}\frac{1}{2}\|\Omega_c-X\|_F^2+\gamma_{c1}\|\Omega_c\|_1.
% $$

% Define $\Omega_c^*$ to be the solution to \eqref{reform}, \citet{combettes05} showed that 
% $$
% \Omega_c^{*}=\prox(\Omega_c^{*}-t\triangledown f(\Omega_c^{*})).
% $$

To summarize, we propose the GEN-ISTA, which updates from iterate $k$ to iterate $k+1$ with
\begin{align}
\Omega_c^{(k+1)}%&=\prox\left(\Omega_c^{*}-t\triangledown f(\Omega_c^{*})\right)\\
%&=\argmin_{\Omega_c \in \mathbb{S}_+^p}  \left\{ {\rm tr}[(\tilde{S}_c - \Omega^{-1(k)}_c+2\gamma_{c2}\Omega^{(k)}_c)'\Omega_c) + \frac{1}{2t}\|\Omega_c - {\Omega}^{(k)}_c \|_F^2 + \gamma_{c1} |\Omega_c|_1\right\}\\
& = \argmin_{\Omega \in \mathbb{S}^p}  \left\{\frac{1}{2}\|\Omega_c - {\Omega}^{(k)}_c + t\{ \tilde{S}_c - (\Omega^{(k)}_c)^{-1}+2\gamma_{c2}\Omega^{(k)}_c\} \|_F^2 + t \gamma_{c1} \|\Omega_c\|_1\right\}\notag \\
&=\Sft(\left({\Omega}^{(k)}_c  - t\{\tilde{S}_c - \Omega^{-1(k)}_c+2\gamma_{c2}\Omega^{(k)}_c\}, t \gamma_{c1}\right),\label{eq:prox_update}
\end{align}
where for a $p \times p$ matrix $A$ and $\eta>0$, $\Sft(A,\eta)$ is the elementwise soft thresholding operator such that $\left[\Sft(A,\tau)\right]_{ij}=\sign(A_{ij})\max(|A_{ij}|-\tau,0)$.  To select $t$ for use in \eqref{eq:prox_update}, we use a backtracking line search. 
For the step to be accepted we check that the the condition in \eqref{majorize} is met and that $\Omega_c^{(k+1)}\in \Sp_+^p$. If both conditions are not met, a smaller step size $t$ must be used.  In Section \ref{ista-convergence} we show that for a pre-specified $t$, which is a function of $\tilde{S}_c,\gamma_{c2},$ and $p$, that this update will always be contained in $\Sp^p_+$. 

%\bfblue{Is there other work that uses a similar gradient descent algorithm? If so, I think that work should be cited here. If not, I think we should provide more detail about how the last equality is derived.}\bfred{Not sure what you mean gradient descent in this case?}
%\bfblue{I am referring to the equation above which is stated as the $(k+1)$th iterate of a proximal gradient descent algorithm.}\bfred{This is just a step for the ISTA algorithm for the proximity operator, so we could really use anything to as a reference for it.} \bfblue{In our MCEN paper for our step similar to this, we mention something along the lines of ``our solution is similar to that of (Friedman's Coordinate descent paper)'' To me that is helpful as a reader because the Friedman paper provides more details about the coordinate descent update, repeating those details in our MCEN paper would have been unnecessary. However, I do think referring the interested reader to the foundation paper we are building off of is useful. A similar connection here would be helpful. Another way of phrasing this is that last step is not immediate to me.}
%(\bfred{Ben this is where you asked about the other paper, I put it in the discussion at the top of the algorithm section on what papers to refer to for more info for ISTA things})

Formally we propose the GEN-ISTA algorithm with backtracking to solve \eqref{reform}:
\begin{enumerate}
\item Initialize, $k=0$, $\eta \in (0,1)$, $\epsilon>0$, $t_0 > 0$, and $\Omega_c^{(0)}\in \Sp^p_+$.
\item While $|f(\Omega_c^{(k)})-f(\Omega_c^{(k+1)})|>\epsilon$ or $k<1$
\begin{enumerate}
\item Set $t=t_0$
\item $\Omega_c^{(k+\frac{1}{2})}={\Omega}^{(k)}_c  - t\{ \tilde{S}_c - (\Omega_c^{(k)})^{-1}+2\gamma_{c2}\Omega^{(k)}_c\}$
%\item \bfpurple{$\Omega_c^{(k+1)}=\Sft(\Omega_c^{(k+1)},t\gamma_{c1})$}
\item $\Omega_c^{(k+1)}=\Sft(\Omega_c^{(k+\frac{1}{2})},t\gamma_{c1})$
\item If $\Omega_c^{(k+1)} \not\in \Sp^p_+$, then update $t=t\eta$ and return to Step 2 (b). Else, continue to Step 2 (e)
\item If 
$$
f(\Omega_c^{(k+1)})> f(\Omega_c^{(k)}) +\tr\left\{(\Omega_c^{(k+1)}-\Omega_c^{(k)})^T(\tilde{S}_c-(\Omega_c^{(k)})^{-1}+2\gamma\Omega_c^{(k)})\right\}+\frac{1}{2t}\|\Omega_c^{(k+1)}-\Omega_c^{(k)}\|_F,
$$
then update $t=t\eta$ and return to Step 2 (b). Else, continue to Step 2 (f).
\item Update $k=k+1$, and return to Step 2 (a).
\end{enumerate}
\end{enumerate}

%\bfblue{I think there is a typo in the above steps because $\Omega_c^{(k+\frac{1}{2})}$ is defined, but never referenced again.}\bfred{fixed} \bfblue{ Still confused, why are we defining $\Omega_c^{(k+\frac{1}{2})}$ one way on (b) and then another way on (c)?}\bfred{changed (c) to be $k+1$. } 
%\bfblue{Please verify change above.}\bfred{it's good, and changed color because it's right.}

%\bfpurple{As we have mentioned our algorithm proposal is related to the G-ISTA algorithm proposed by \cite{rolfs12}, but have some computational considerations that their work does not have to consider, and other results that consider significant changes.}  
As previously mentioned, the G-ISTA algorithm proposed by \cite{rolfs12} is a special case of the GEN-ISTA algorithm, when $\gamma_{c2}=0$, but there are substantial differences. In particular, \cite{rolfs12} only consider the case where $\tilde{S}_c$ is a symmetric, non-negative definite matrix, but in our application there is no guarantee that $\tilde{S}_c$ is non-negative definite.  In Section \ref{ista-convergence} we demonstrate the role of $\gamma_{c2}$ in the rate of convergence and the choice of appropriate step size, $t$. The elastic net penalized normal likelihood precision matrix estimation problem was also studied by \citet{atchade_19}, who proposed a stochastic gradient descent algorithm for solving \eqref{reform} with $p$ very large and $\tilde{S}_c$ being a sample covariance matrix. 

\subsection{Convergence Analysis of GEN-ISTA Algorithm} 
\label{ista-convergence}
In this section we will discuss the convergence of the GEN-ISTA subroutine proposed in the previous section. Our approach to convergence analysis is based on that of \citet{rolfs12}, but in our application, we must address that the input matrix $\tilde{S}_c$ may be indefinite. We show that despite the generality of the input matrix, our proximal gradient descent scheme is guaranteed to converge at a linear rate and that the maximum step size is a function of $\alpha$ and $\gamma_{c2}$, both of which are known in our blockwise coordinate descent scheme. Specifically, we show that there exists a worst case contraction constant, $\delta \in (0,1)$, such that
$$
\|\Omega_c^{(k+1)}-\Omega_c^{*}\|_F\leq \delta\|\Omega_c^{(k)}-\Omega_c^{*}\|_F.
$$ 
In our case $\delta$ is a function of $\tilde{S}_c, \gamma_{c2},$ and $p$. We will show that as $\gamma_{c2}$ increases, $\delta$ approaches 0. Throughout this section, for a $p \times p$ matrix $A$, let $\rho_1(A)\geq \rho_2(A) \geq \ldots \geq \rho_p(A)$ denote the ordered eigenvalues of $A$. 

% which investigated the convergence properties of G-ISTA method\bfred{, and our results have similarities to those presented in \citet{atchade_19}  specifically we gain a deeper understanding of $\gamma_{1c}$ and  $\gamma_{c2}$ and the role it plays in the convergence of the GEN-ISTA subroutine. As our formulation is different than the approach taken by \citet{atchade_19}, and a different proof techniques is used these differences are expected.} In this section we show that GEN-ISTA has a linear convergence rate. \bfred{That is we are able to show that there exists a worst case contraction constant, $\delta \in (0,1)$, such that}

% $$
% \|\Omega_c^{(k+1)}-\Omega_c^{*}\|_F\leq \delta\|\Omega_c^{(k)}-\Omega_c^{*}\|_F,
% $$ 
% In our case $\delta$ is a function of $\tilde{S}_c, \gamma_{c2},$ and $p$. As $\gamma_{c2}$ gets larger we show that $\delta$ approaches 0.

Let $\Omega_c^{*}$ be the solution to \eqref{reform}. We first will show that $\Omega_c^{*}$ is contained in a compact subset of $\mathbb{S}^p_+$. 

% \bfblue{After a quick look at the rolfs12 paper, I am under the impression that it only needs to be Lipschitz continous over the compact subset that contains the optimal solution. It doesn't necessarily have to hold for all compact subsets. Also, what do we mean by feasible? I did some quick looking up of feasible algorithms and that seems to imply that it can be done in a (reasonable) number of finite steps. From talking to you, I believe you were talking about something stronger than that.}\bfred{I think you're right, but I think it's true for any compact subset of $S_+^p$ because the solution will be a special case of that compact subset}. 

\begin{lemma}
\label{optim_bounds}
Let $\gamma_{c1}>0$, $\gamma_{c2}>0$ and $\Omega_c^{*}$ to be the solution to \eqref{reform} then  $\alpha I \preceq \Omega_c^{*} \preceq \beta I$, where
$$
\alpha^{-1}=.5\left(\rho_{1}(\tilde{S}_c)+\gamma_{c1}p+\sqrt{(\rho_{1}(\tilde{S}_c)+\gamma_{c1}p)^2+8\gamma_{c2}}\right)
$$ 
and 
$$
\beta^{-1}=.5\left(\rho_{p}(\tilde{S}_c)-\gamma_{c1}p+\sqrt{(\rho_{p}(\tilde{S}_c)-\gamma_{c1}p)^2+8\gamma_{c2}}\right).
$$
\end{lemma}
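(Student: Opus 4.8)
The plan is to derive both bounds directly from the first-order subgradient optimality conditions of the strictly convex program \eqref{reform}. First I would argue that a unique minimizer $\Omega_c^*$ exists and lies in the interior of $\mathbb{S}^p_+$: the term $-\log\det(\Omega_c)$ drives the objective to $+\infty$ as $\Omega_c$ approaches a singular matrix, while $\gamma_{c2}\|\Omega_c\|_F^2$ makes the objective coercive as $\|\Omega_c\|_F \to \infty$, and strict convexity (from $-\log\det$ together with the Frobenius term) gives uniqueness. Since the minimizer is interior, only the $L_1$ term is nonsmooth there, so stationarity reads
\begin{equation*}
\tilde{S}_c - (\Omega_c^*)^{-1} + \gamma_{c1} Z + 2\gamma_{c2}\Omega_c^* = 0
\end{equation*}
for some $Z$ in the subdifferential of $\|\cdot\|_1$ at $\Omega_c^*$, i.e. $Z_{jk} = \sign([\Omega_c^*]_{jk})$ when $[\Omega_c^*]_{jk}\neq 0$ and $Z_{jk}\in[-1,1]$ otherwise. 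Rearranging gives $(\Omega_c^*)^{-1} = \tilde{S}_c + \gamma_{c1}Z + 2\gamma_{c2}\Omega_c^*$.

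The key auxiliary fact I would establish is a spectral bound on the subgradient: since $Z$ is symmetric with every entry in $[-1,1]$, its spectral norm is at most its maximum absolute row sum, so $\rho_1(Z) \le p$ and $\rho_p(Z) \ge -p$; equivalently $|v^T Z v| \le p$ for every unit vector $v$. With this in hand I would evaluate the quadratic form of the stationarity identity against the extreme eigenvectors of $\Omega_c^*$. For the upper bound, let $v$ be a unit top eigenvector, $\Omega_c^* v = \rho_1(\Omega_c^*) v$; then $v^T(\Omega_c^*)^{-1}v = \rho_1(\Omega_c^*)^{-1}$, and using $v^T\tilde{S}_c v \ge \rho_p(\tilde{S}_c)$ and $\gamma_{c1}v^T Z v \ge -\gamma_{c1}p$ gives $\rho_1(\Omega_c^*)^{-1} \ge \rho_p(\tilde{S}_c) - \gamma_{c1}p + 2\gamma_{c2}\rho_1(\Omega_c^*)$. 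Writing $x = \rho_1(\Omega_c^*)$ and clearing the positive factor $x$ yields the quadratic inequality $2\gamma_{c2}x^2 + (\rho_p(\tilde{S}_c)-\gamma_{c1}p)x - 1 \le 0$. Symmetrically, testing against a unit bottom eigenvector $u$ (with $u^T\tilde{S}_c u \le \rho_1(\tilde{S}_c)$ and $\gamma_{c1}u^T Z u \le \gamma_{c1}p$) and writing $y = \rho_p(\Omega_c^*)$ gives $2\gamma_{c2}y^2 + (\rho_1(\tilde{S}_c)+\gamma_{c1}p)y - 1 \ge 0$.

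Finally I would solve these two quadratics. Each has a negative constant term and positive leading coefficient, hence exactly one positive root. The upward-opening parabola in the first inequality is nonpositive only between its roots, so $x$ is at most its positive root; rationalizing that root (multiplying numerator and denominator by the conjugate surd) collapses it exactly to $\beta$, giving $\rho_1(\Omega_c^*) \le \beta$. The parabola in the second inequality is nonnegative only for $y$ at least its positive root, which rationalizes to $\alpha$, giving $\rho_p(\Omega_c^*) \ge \alpha$; together these are the claimed $\alpha I \preceq \Omega_c^* \preceq \beta I$. The main obstacle is not the algebra but handling the optimality step when $\tilde{S}_c$ is indefinite: the argument hinges on the subgradient bound $|v^T Z v|\le p$ and, crucially, on $\gamma_{c2}>0$, which both guarantees the interior, unique minimizer and forces the two quadratics to have genuine positive roots, thereby controlling the spectrum that an indefinite $\tilde{S}_c$ would otherwise leave unbounded. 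Verifying the surd identities identifying the positive roots with $\alpha$ and $\beta$ is the one place where care is needed to match the stated constants.
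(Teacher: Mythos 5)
Your proof is correct, and it reaches the stated constants by a genuinely different route than the paper. The paper works through Lagrangian duality: it exchanges the min and max (as in Banerjee et al.), recognizes the inner minimization as a ridge-penalized precision matrix problem whose solution has a closed form sharing eigenvectors with the input matrix $\tilde{S}_c + \gamma_{c1}\widehat{Z}_c$ (citing Witten et al.), then bounds the eigenvalues of that input matrix via Weyl's theorem together with the dual-variable bound $\|\widehat{Z}_c\|_2 \le p$, and finally inverts the resulting eigenvalue bounds. You instead stay entirely in the primal: existence, uniqueness, and interiority of $\Omega_c^*$ from coercivity and strict convexity, then the subgradient stationarity identity $(\Omega_c^*)^{-1} = \tilde{S}_c + \gamma_{c1}Z + 2\gamma_{c2}\Omega_c^*$, tested against the extreme unit eigenvectors of $\Omega_c^*$ via Rayleigh quotients. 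The two arguments meet at the same core: your subgradient $Z$ is precisely the paper's optimal dual variable, both use the entrywise bound to get $\|Z\|_2 \le p$, and both reduce to the quadratic $2\gamma_{c2}x^2 + ax - 1$ having a single positive root (with $a = \rho_1(\tilde{S}_c)+\gamma_{c1}p$ for the lower bound and $a = \rho_p(\tilde{S}_c)-\gamma_{c1}p$ for the upper). What your version buys is self-containment — no appeal to strong duality for the min-max exchange, no external closed-form spectral formula, and an explicit existence/uniqueness argument the paper leaves implicit (which matters here, since indefiniteness of $\tilde{S}_c$ means existence genuinely hinges on $\gamma_{c2}>0$). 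What the paper's dual route buys is a stronger structural fact: $\Omega_c^*$ is exactly a ridge estimator evaluated at $\tilde{S}_c+\gamma_{c1}\widehat{Z}_c$, so every eigenvalue, not just the extreme ones, is characterized; the lemma itself only needs the extremes, which is all your argument extracts.
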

%\bfblue{Is this saying that $\alpha \leq \lambda_{\min}(\Omega_c^{*})$ and $\lambda_{\max}(\Omega_c^{*}) \leq \beta$? Might be a good idea to define $\preceq$.} \bfred{Yes that's what it's saying, but to my knowledge that's standard notation for matrices. I'm fine leaving as is.}  
The proof of Lemma \ref{optim_bounds} is contained in the Appendix and uses the dual formulation of \eqref{reform}. 

%\bfred{To our knowledge, this work is the first to discuss or propose this type of formulation for the elastic net penalized precision matrix estimator. Furthermore, our proof identifies a relationship between the solution to the elastic net penalized precision matrix estimator and the ridge penalized precision matrix estimator.} 
%\textcolor{blue}{Brad, I'm not sure this is necessary -- I think trying to play up this theory, even if it is new, is probably asking for more trouble than it helps us. }
%\bfpurple{We also point out a major difference between this work and that of \citet{rolfs12} is that our bound, (nor proofs) have to consider separate cases based on  $\tilde{S}_c \in \Sp_+^p$.} 
Our bounds are distinct from those in \citet{rolfs12} as theirs do not allow for $\tilde{S}_c$ which is indefinite. Notably, the $\alpha$ we obtain is the same as that in \citet{atchade_19}, although the $\beta$ we obtain is distinct, again owing to the indefiniteness of $\tilde{S}_c$. Next, we establish that the Lipschitz continuity of the gradient of \eqref{reform}, which we used to construct the majorizing function \eqref{eq:majorizer1}. 

 % We do not need to create separate upper bounds depend on whether $\tilde{S}_c$ is positive definite, even though it is possible it is not, because of the inclusion of the cluster ridge fusion penalty \bfred{which is due to the use of the dual formulation in our proof technique.}. %\bfpurple{This is important as in the case of \eqref{reform} where it is possible that $\rho_p(\tilde{S}_c)<0$.}  

%\bfred{Figure \ref{theory_plot} shows how $\gamma_{c1}$ and $\gamma_{c2}$ effect the condition number, $\beta/\alpha$ associated with the bounds of $\Omega_c^*$, in the case where $\rho_1(\tilde{S}_c)=10$, and $\rho_p(\tilde{S}_c)=1$ with $p=10$.  As $\gamma_{c2}$ gets larger the condition number approaches 1 for all values of $\gamma_{c1}$. In contrast if $\gamma_{c2}$ is fixed, increasing $\gamma_{c1}$ increases the condition number.}

 %\bfblue{What is the ``condition number''? I don't see it defined.}

%\begin{figure}
%\begin{center}
%\includegraphics[width=4in]{theory_plot.pdf}
%\end{center}
%\caption{Comparison of the effect of $\gamma_{c1}$ and $\gamma_{c2}$ on $\alpha$ and $\beta$ define in Lemma \ref{optim_bounds}. The example uses $\rho_1(\tilde{S}_c)=10$, and $\rho_p(\tilde{S}_c)=1$ with $p=10$, and values are reported on log scale (base 10). }
%\label{theory_plot}
%\end{figure}

\begin{lemma}
\label{lip_grad}
Assume $\alpha I\preceq \Omega_A,\Omega_B\preceq \beta I$ such that $0<\alpha<\beta<\infty$ then 
$$
\|\triangledown f(\Omega_A)-\triangledown f(\Omega_B)\|_F \leq  \sqrt{p}\left(\frac{1}{\alpha^2}+ 2\gamma_{c2}\right)\|\Omega_A-\Omega_B\|_F.
$$
%\bfblue{I think the norm on the left side of the inequality should be specified.} 
Hence, $\bigtriangledown f(\Omega)= \tilde{S}_c-\Omega^{-1}+2\gamma_{c2}\Omega$ is Lipschitz 
on any compact subset of $\Sp_+^p$ with constant  $\sqrt{p}\left(\frac{1}{\alpha^2}+ 2\gamma_{c2}\right)$.
\end{lemma}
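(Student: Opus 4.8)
The plan is to exploit the cancellation of the constant term $\tilde{S}_c$ in the gradient and then control the two remaining pieces separately. Writing out the difference, I would first observe
$$\triangledown f(\Omega_A) - \triangledown f(\Omega_B) = -\left(\Omega_A^{-1} - \Omega_B^{-1}\right) + 2\gamma_{c2}\left(\Omega_A - \Omega_B\right),$$
since the $\tilde{S}_c$ terms cancel. Hence, by the triangle inequality, it suffices to bound $\|\Omega_A^{-1} - \Omega_B^{-1}\|$ and $\|\Omega_A - \Omega_B\|$; the latter already appears on the right-hand side with coefficient $2\gamma_{c2}$.

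The heart of the argument is the bound on the difference of inverses. I would use the resolvent identity $\Omega_A^{-1} - \Omega_B^{-1} = \Omega_A^{-1}(\Omega_B - \Omega_A)\Omega_B^{-1}$ together with submultiplicativity of the spectral norm $\|\cdot\|_2$ (operator norm), giving
$$\|\Omega_A^{-1} - \Omega_B^{-1}\|_2 \le \|\Omega_A^{-1}\|_2\,\|\Omega_A - \Omega_B\|_2\,\|\Omega_B^{-1}\|_2.$$
The lower eigenvalue bound $\alpha I \preceq \Omega_A, \Omega_B$ yields $\|\Omega_A^{-1}\|_2, \|\Omega_B^{-1}\|_2 \le 1/\alpha$, so the inverse difference contracts by a factor $1/\alpha^2$ in spectral norm. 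Combining this with the $2\gamma_{c2}$ term gives the spectral-norm estimate $\|\triangledown f(\Omega_A) - \triangledown f(\Omega_B)\|_2 \le (\alpha^{-2} + 2\gamma_{c2})\|\Omega_A - \Omega_B\|_2$.

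To reach the stated Frobenius-norm bound, I would pass between the two norms using the standard inequalities $\|M\|_2 \le \|M\|_F$ and $\|M\|_F \le \sqrt{p}\,\|M\|_2$ for a $p \times p$ matrix $M$: applying the first to $\Omega_A - \Omega_B$ and the second to $\triangledown f(\Omega_A) - \triangledown f(\Omega_B)$ produces exactly the factor $\sqrt{p}\,(\alpha^{-2} + 2\gamma_{c2})$. The only point to record is that Lemma \ref{optim_bounds} (together with the surrounding assumption $\alpha I \preceq \Omega_A, \Omega_B \preceq \beta I$) supplies the lower bound $\alpha$ on a box containing the iterates, which is all that the inverse estimate requires; the upper bound $\beta$ is used only to ensure that this box is compact and does not enter the constant.

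There is no serious obstacle here, as every step is an application of a standard matrix inequality. The one point requiring mild care is the bookkeeping of norms: the clean submultiplicative form of the resolvent identity is what keeps the estimate simple, and the $\sqrt{p}$ factor appears precisely, and only, because of the final conversion from the spectral norm to the Frobenius norm on the gradient difference. I note that a slightly sharper constant without the $\sqrt{p}$ is in fact available by bounding $\|\Omega_A^{-1}(\Omega_B-\Omega_A)\Omega_B^{-1}\|_F$ directly via the mixed inequality $\|ABC\|_F \le \|A\|_2\,\|B\|_F\,\|C\|_2$, but the stated bound is entirely sufficient for the convergence analysis that follows.
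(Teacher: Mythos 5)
Your proof is correct and follows essentially the same route as the paper: cancel $\tilde{S}_c$, bound the gradient difference in spectral norm using the $\alpha^{-2}$ Lipschitz estimate for the matrix inverse together with the triangle inequality, then convert between $\|\cdot\|_2$ and $\|\cdot\|_F$ to pick up the $\sqrt{p}$ factor. The only cosmetic difference is that you prove the inverse-difference bound inline via the resolvent identity, whereas the paper cites Lemma 2 of \citet{rolfs12} for the same inequality.
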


The proof of Lemma \ref{lip_grad} can be found in the Appendix. The combination of Lemma \ref{optim_bounds} and Lemma \ref{lip_grad} give us necessary and sufficient conditions to apply Theorem 3.1 of \cite{beck09} to \eqref{reform} to obtain a sublinear convergence rate between iterates of the objective function.

 Next, we present a lemma that ensures that there always exists a step size parameter $t$ such that the iterates of the algorithm are contained in a compact subset of $\Sp^p_+$.
\begin{lemma}
\label{iterate_bound}
Let $\gamma_{c1},\gamma_{c2}>0$ and define $\alpha$ and $\beta$ to be defined as presented in Lemma \ref{optim_bounds}.  Assume $t\leq \frac{a^2}{2\alpha^2\gamma_{c2}+1}$ then the iterates of the proposed algorithm satisfy $\alpha I \preceq \Omega_c^{(k)} \preceq b'I$ for all $k$ where 
$$
b'=\|\Omega_c^{*}\|_2+\|\Omega_c^{(0)}-\Omega_c^{*}\|_F\leq \beta+\sqrt{p}(\beta-\alpha).
$$
\end{lemma}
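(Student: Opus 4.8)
The plan is to prove the two eigenvalue bounds separately, both by induction on $k$, by exploiting the decomposition of each GEN-ISTA update into a gradient half-step $\Omega_c^{(k+1/2)} = \Omega_c^{(k)} - t\nabla f(\Omega_c^{(k)})$ followed by the soft-thresholding $\Omega_c^{(k+1)} = \Sft(\Omega_c^{(k+1/2)}, t\gamma_{c1})$. I take the standing initialization $\alpha I \preceq \Omega_c^{(0)} \preceq \beta I$ to supply the base case, and show that each step preserves both $\Omega_c^{(k)} \succeq \alpha I$ and $\|\Omega_c^{(k)} - \Omega_c^*\|_F \leq \|\Omega_c^{(0)} - \Omega_c^*\|_F$; the two displayed inequalities follow from these.

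For the lower bound, I write the half-step as $\Omega_c^{(k+1/2)} = (1-2t\gamma_{c2})\Omega_c^{(k)} + t(\Omega_c^{(k)})^{-1} - t\tilde{S}_c$. The first two terms share eigenvectors, so their sum has eigenvalues $\psi(\lambda) = (1-2t\gamma_{c2})\lambda + t\lambda^{-1}$ evaluated at the eigenvalues $\lambda$ of $\Omega_c^{(k)}$. A short computation shows the hypothesis $t \leq \alpha^2/(2\alpha^2\gamma_{c2}+1)$ is exactly equivalent to $\sqrt{t/(1-2t\gamma_{c2})} \leq \alpha$, i.e. the minimizer of the convex $\psi$ lies at or below $\alpha$; since by the inductive hypothesis every eigenvalue of $\Omega_c^{(k)}$ is at least $\alpha$, $\psi$ is increasing there and the commuting part is $\succeq \psi(\alpha) I$. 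Applying Weyl's inequality to absorb the possibly indefinite term $-t\tilde{S}_c$ gives $\rho_p(\Omega_c^{(k+1/2)}) \geq \psi(\alpha) - t\rho_1(\tilde{S}_c)$. Since soft-thresholding perturbs each of the $p^2$ entries by at most $t\gamma_{c1}$, we have $\|\Omega_c^{(k+1)} - \Omega_c^{(k+1/2)}\|_F \leq p\,t\gamma_{c1}$, so the smallest eigenvalue can fall by at most $p\,t\gamma_{c1}$. The crux is then the algebraic identity, obtained by rearranging the defining equation for $\alpha$ in Lemma \ref{optim_bounds}, that $\rho_1(\tilde{S}_c) + \gamma_{c1}p = \alpha^{-1} - 2\gamma_{c2}\alpha$; substituting this collapses $\psi(\alpha) - t\rho_1(\tilde{S}_c) - p\,t\gamma_{c1}$ identically to $\alpha$ (for every admissible $t$), giving $\Omega_c^{(k+1)} \succeq \alpha I$.

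For the upper bound I would show the full proximal gradient map $T(\Omega) = \Sft(\Omega - t\nabla f(\Omega), t\gamma_{c1})$ is nonexpansive in Frobenius norm toward its fixed point $\Omega_c^*$. Soft-thresholding is the proximal operator of $t\gamma_{c1}\|\cdot\|_1$ and hence nonexpansive; the gradient step $\Omega \mapsto \Omega - t\nabla f(\Omega)$ is nonexpansive by co-coercivity of $\nabla f$, which holds because $f$ is convex (the indefiniteness of $\tilde{S}_c$ only contributes a linear term) and, on the convex region $\{\Omega \succeq \alpha I\}$, $\nabla f$ is Lipschitz in Frobenius norm with constant $\alpha^{-2} + 2\gamma_{c2}$ via $\|\Omega_A^{-1} - \Omega_B^{-1}\|_F \leq \alpha^{-2}\|\Omega_A - \Omega_B\|_F$ (the estimate underlying Lemma \ref{lip_grad}). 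The hypothesized step size equals $1/(\alpha^{-2} + 2\gamma_{c2})$, which is below $2/(\alpha^{-2}+2\gamma_{c2})$, so the step is nonexpansive. Both $\Omega_c^{(k)}$ (by the lower bound just proved) and $\Omega_c^*$ (by Lemma \ref{optim_bounds}) lie in this convex region, and $\Omega_c^*$ is interior, so the optimality condition for \eqref{reform} gives $\Omega_c^* = T(\Omega_c^*)$; hence $\|\Omega_c^{(k+1)} - \Omega_c^*\|_F \leq \|\Omega_c^{(k)} - \Omega_c^*\|_F$ and, inductively, $\|\Omega_c^{(k)} - \Omega_c^*\|_F \leq \|\Omega_c^{(0)} - \Omega_c^*\|_F$. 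The triangle inequality then yields $\|\Omega_c^{(k)}\|_2 \leq \|\Omega_c^*\|_2 + \|\Omega_c^{(0)} - \Omega_c^*\|_F = b'$, i.e. $\Omega_c^{(k)} \preceq b' I$; the closing inequality $b' \leq \beta + \sqrt{p}(\beta - \alpha)$ follows from $\|\Omega_c^*\|_2 \leq \beta$ and, since $\Omega_c^{(0)}, \Omega_c^* \in [\alpha I, \beta I]$, from $\|\Omega_c^{(0)} - \Omega_c^*\|_F \leq \sqrt{p}\,\|\Omega_c^{(0)} - \Omega_c^*\|_2 \leq \sqrt{p}(\beta - \alpha)$.

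The main obstacle is the lower bound: soft-thresholding acts entrywise and does not respect the eigenvalue ordering, so the smallest eigenvalue cannot be propagated through it exactly. I expect to control it only through the crude but sufficient perturbation bound $\|\Omega_c^{(k+1)} - \Omega_c^{(k+1/2)}\|_F \leq p\,t\gamma_{c1}$, and the delicate point is verifying that the definition of $\alpha$ is tuned so that the gain $t\alpha^{-1}$ from the $t(\Omega_c^{(k)})^{-1}$ term precisely offsets the two losses $t\rho_1(\tilde{S}_c)$ and $p\,t\gamma_{c1}$; this is exactly where the step-size threshold and the formula for $\alpha$ interlock. A secondary point to flag is that the inductive base case, and the final comparison with $\beta + \sqrt{p}(\beta-\alpha)$, require the initialization to satisfy $\alpha I \preceq \Omega_c^{(0)} \preceq \beta I$.
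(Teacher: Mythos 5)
Your proposal is correct and follows essentially the same route as the paper's appendix proof: for the lower bound, the half-step spectral analysis (the commuting part $(1-2t\gamma_{c2})\Omega_c^{(k)}+t(\Omega_c^{(k)})^{-1}$ plus Weyl's inequality for $-t\tilde{S}_c$), the $p\,t\gamma_{c1}$ soft-thresholding eigenvalue perturbation, and the interlocking of the step-size condition with the identity $\rho_1(\tilde{S}_c)+\gamma_{c1}p=\alpha^{-1}-2\gamma_{c2}\alpha$ defining $\alpha$ (the paper's Lemmas \ref{half_step_lemma}, \ref{thresh_lemma}, and \ref{lower_bound_lemma}); and for the upper bound, nonexpansiveness of the proximal-gradient map toward its fixed point $\Omega_c^{*}$ followed by the triangle inequality (the paper's Lemmas \ref{diff_bound} and \ref{upper_bound_lemma}). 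The only differences are cosmetic and, if anything, improvements: you organize the argument as a clean joint induction (the paper's lemma chain is somewhat circular), your lower-bound step avoids needing the upper bound $b'$ at all, you justify gradient-step nonexpansiveness via co-coercivity where the paper bounds the Jacobian $\|m_tI_{p^2}-tH_{k,v}^{-1}\otimes H_{k,v}^{-1}\|_2\le 1$ along the segment (the Jacobian argument is the airtight version on this restricted convex region, and your Hessian bounds deliver it verbatim), and you explicitly flag the initialization requirement $\alpha I \preceq \Omega_c^{(0)} \preceq \beta I$, which the paper also relies on but leaves unstated.
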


The result of Lemma \ref{iterate_bound} is similar to those presented in \cite{rolfs12} and \cite{atchade_19}. The proof can be found in the Appendix.

Finally, we present a result on the linear convergence rate for our algorithm given the iterates are contained on on a compact subset of $\Sp^p_+$. 
\begin{theorem}
\label{our_alg_conv}
Let $\alpha$ and $\beta$ be defined the same as in Lemma \ref{optim_bounds}.  Then for constants $\gamma_{c1},\gamma_{c2}>0$ and $t \leq \frac{\alpha^2}{2\alpha^2\gamma_{c2}+1}$ the iterates of our algorithm converge linearly with a rate of 

$$
\delta=1-2\left[ 1+\frac{2\gamma_{c2}+\alpha^{-2}}{2\gamma_{c2}+\left\{\beta+\sqrt{p}(\beta-\alpha)\right\}^{-2}} \right]^{-1} <1.
$$
\end{theorem}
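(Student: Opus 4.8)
The plan is to exhibit the GEN-ISTA update as a contraction of a fixed-point map on the compact set to which both the iterates and the optimum are confined. Write the update \eqref{eq:prox_update} as $\Omega_c^{(k+1)} = T_t(\Omega_c^{(k)})$ with $T_t(\Omega) = \Sft(\Omega - t\nabla f(\Omega),\, t\gamma_{c1})$. The first-order optimality condition for \eqref{reform} is $0 \in \nabla f(\Omega_c^{*}) + \gamma_{c1}\partial\|\Omega_c^{*}\|_1$, which is equivalent to $\Omega_c^{*} = T_t(\Omega_c^{*})$, so $\Omega_c^{*}$ is a fixed point of $T_t$. Since $\Sft(\cdot, t\gamma_{c1})$ is the proximal operator of the convex function $t\gamma_{c1}\|\cdot\|_1$, it is (firmly) nonexpansive, so $\|\Sft(A,t\gamma_{c1}) - \Sft(B,t\gamma_{c1})\|_F \le \|A-B\|_F$. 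Applying this with $A = \Omega_c^{(k)} - t\nabla f(\Omega_c^{(k)})$ and $B = \Omega_c^{*} - t\nabla f(\Omega_c^{*})$ reduces everything to bounding the gradient-descent map $G_t(\Omega) = \Omega - t\nabla f(\Omega)$, via
\[ \|\Omega_c^{(k+1)} - \Omega_c^{*}\|_F \le \|G_t(\Omega_c^{(k)}) - G_t(\Omega_c^{*})\|_F. \]

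Next I would localize to the compact set $\mathcal{K} = \{\Omega \in \Sp^p_+ : \alpha I \preceq \Omega \preceq b'I\}$, with $b' = \beta + \sqrt{p}(\beta-\alpha)$. Lemma \ref{optim_bounds} gives $\alpha I \preceq \Omega_c^{*} \preceq \beta I \preceq b'I$, while Lemma \ref{iterate_bound}, under the standing assumption $t \le \alpha^2/(2\alpha^2\gamma_{c2}+1)$, keeps every iterate in $\mathcal{K}$, so both arguments of $G_t$ lie in $\mathcal{K}$. On $\mathcal{K}$ the Hessian of $f$ acts on a symmetric perturbation $\Delta$ by $\nabla^2 f(\Omega)[\Delta] = \Omega^{-1}\Delta\Omega^{-1} + 2\gamma_{c2}\Delta$; its eigenvalues are $\rho_i(\Omega)^{-1}\rho_j(\Omega)^{-1} + 2\gamma_{c2}$, which $\alpha I \preceq \Omega \preceq b'I$ confines to $[m,L]$ with $m = b'^{-2} + 2\gamma_{c2}$ and $L = \alpha^{-2} + 2\gamma_{c2}$. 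Hence $f$ is $m$-strongly convex and $\nabla f$ is $L$-Lipschitz on $\mathcal{K}$ in the spectral sense; this is sharper than the Frobenius constant of Lemma \ref{lip_grad} (which carried an extra $\sqrt{p}$ and was only needed for the majorization argument), and it is the bound that reproduces the $\delta$ free of any $\sqrt{p}$.

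The core estimate is the contraction of $G_t$. Writing $\Delta = \Omega_c^{(k)} - \Omega_c^{*}$ and $r = \nabla f(\Omega_c^{(k)}) - \nabla f(\Omega_c^{*})$,
\[ \|G_t(\Omega_c^{(k)}) - G_t(\Omega_c^{*})\|_F^2 = \|\Delta\|_F^2 - 2t\langle \Delta, r\rangle + t^2\|r\|_F^2, \]
where $\langle\cdot,\cdot\rangle$ is the trace inner product. I would then invoke the standard joint consequence of $m$-strong convexity and $L$-smoothness, $\langle \Delta, r\rangle \ge \tfrac{mL}{m+L}\|\Delta\|_F^2 + \tfrac{1}{m+L}\|r\|_F^2$. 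For $t \le 2/(m+L)$ the coefficient $t^2 - 2t/(m+L)$ of $\|r\|_F^2$ is nonpositive and that term is discarded, leaving $\|G_t(\Omega_c^{(k)}) - G_t(\Omega_c^{*})\|_F^2 \le \big(1 - \tfrac{2tmL}{m+L}\big)\|\Delta\|_F^2$. Taking $t$ at its feasible maximum $t = \alpha^2/(2\alpha^2\gamma_{c2}+1) = 1/L$, which does satisfy $1/L \le 2/(m+L)$ because $m \le L$, produces the reduction factor $1 - \tfrac{2m}{m+L} = \tfrac{L-m}{L+m}$; substituting the values of $m$ and $L$ gives exactly the stated $\delta$, and $\delta < 1$ since $m > 0$.

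The step I expect to be the main obstacle is pinning down the contraction \emph{constant} rather than merely some linear rate. The naive route, nonexpansiveness of $\Sft$ together with the spectral bound $\|I - t\nabla^2 f\|_2 \le \max(|1-tm|,|1-tL|)$, only yields the weaker $1 - m/L$ at $t = 1/L$; reaching the sharper $(L-m)/(L+m)$ requires the combined strong-convexity/co-coercivity inequality above, together with the observation that the feasibility cap $t \le 1/L$ from Lemma \ref{iterate_bound} still lies inside the window $t \le 2/(m+L)$ in which the gradient term can be dropped, so some care is needed to align the admissible step size with the constant being claimed. A secondary point, inherited from the indefiniteness of $\tilde{S}_c$, is that none of these bounds hold globally: they must be justified on $\mathcal{K}$, which is precisely what Lemmas \ref{optim_bounds}--\ref{iterate_bound} are arranged to supply, so the local convexity geometry of $f$ on $\mathcal{K}$ is all that the argument consumes.
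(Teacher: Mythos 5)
Your reduction follows the same skeleton as the paper's: write the update as $\Omega_c^{(k+1)}=\Sft\left(\Omega_c^{(k)}-t\nabla f(\Omega_c^{(k)}),t\gamma_{c1}\right)$, observe that $\Omega_c^{*}$ is a fixed point of this map, strip off the soft-thresholding by nonexpansiveness (the paper cites Lemma 2.2 of Combettes and Wajs), and then contract the gradient step on the compact set $\mathcal{K}=\{\Omega:\alpha I\preceq\Omega\preceq b'I\}$ furnished by Lemmas \ref{optim_bounds} and \ref{iterate_bound}. Where you diverge is the contraction estimate itself: the paper (Lemma \ref{diff_bound}) uses a mean-value/Jacobian argument, bounding $\|m_tI_{p^2}-tH^{-1}\otimes H^{-1}\|_2\le\max\{|1-tm|,|1-tL|\}$ along the segment joining $\Omega_c^{(k)}$ and $\Omega_c^{*}$, where in your notation $m=2\gamma_{c2}+b'^{-2}$ and $L=2\gamma_{c2}+\alpha^{-2}$; you instead invoke the strong-convexity/co-coercivity inequality.

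There is a genuine gap in your final step, and it sits exactly where you said the difficulty lies. The co-coercivity route controls \emph{squared} distances: at $t=1/L$ your inequality gives $\|\Omega_c^{(k+1)}-\Omega_c^{*}\|_F^2\le\frac{L-m}{L+m}\|\Omega_c^{(k)}-\Omega_c^{*}\|_F^2$, i.e., a contraction of the distance with constant $\sqrt{(L-m)/(L+m)}=\sqrt{\delta}$, not $\delta$. The step size that would turn your bound into the unsquared factor $(L-m)/(L+m)$ is $t=2/(m+L)$, and $2/(m+L)>1/L$ because $m<L$; that choice violates the hypothesis $t\le\alpha^2/(2\alpha^2\gamma_{c2}+1)=1/L$ and, worse, falls outside the range for which Lemma \ref{iterate_bound} keeps the iterates in $\mathcal{K}$ --- confinement your argument consumes. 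Your closing comparison is also backwards: since $(1-m/L)^2<\frac{L-m}{L+m}$, the ``naive'' spectral bound $1-m/L$ at $t=1/L$ is \emph{smaller} than the $\sqrt{(L-m)/(L+m)}$ your route certifies, so co-coercivity buys you nothing here. (A secondary issue: co-coercivity with constants valid only on $\mathcal{K}$ needs justification, because its standard proof evaluates $f$ at auxiliary points that may leave $\mathcal{K}$; the paper's Jacobian argument needs only Hessian bounds on the segment, which stays inside the convex set $\mathcal{K}$.) To be fair, the paper's proof exhibits a mirror image of the same tension: its optimizing step $t_w=2/(4\gamma_{c2}+\alpha^{-2}+b'^{-2})=2/(m+L)$ in Lemma \ref{diff_bound} also exceeds the admissible range, so the stated $\delta$ is the optimum of the contraction factor over all $t>0$ rather than a rate certified at a feasible step; but at feasible $t$ the paper's lemma still yields a distance contraction $1-tm<1$, whereas your argument as written establishes the rate $\sqrt{\delta}$, not the claimed $\delta$.
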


The proof for Theorem \ref{our_alg_conv} can be found in the Appendix. Theorem \ref{our_alg_conv} establishes the linear convergence of our proposed ISTA algorithm. Furthermore, these results show how $\gamma_{c2}$ influences the convergence of the algorithm, and the optimal solution bounds.  In particular, for a fixed $\gamma_{c1}$, as $\gamma_{c2}$ gets larger, the rate approaches 0. From a computational perspective, these results suggest that we could fix the step size parameter $t$ and avoid the backtracking line search when $p$ is large because $\alpha$ and $\gamma_{c2}$ can be calculated directly at each iteration.

\section{Gaussian graphical modeling simulation studies}
\label{ggsim}
%\bfblue{Some of the figures need to be cleaned up. For instance, Figures \ref{pcen_sim2_stpr}}-\ref{pcen_sim2_forb} labels do not match up with the acronyms we use in the text. For instance jgl-VL instead of JGL. This is potentially a problem in Figure 1 because we refer to $\gamma_{c1}$ and $\gamma_{c2}$ as gamma1 and gamma2. 

% \bfblue{I found this section a little confusing. Specifically,
% \begin{enumerate}
% \item I do not understand the mechanism for creating the covariance matrices. Also there seems to be some redundancy in the approaches. Seems like there should be a way to present this without having to keep repeat ourselves.
% \item No results for Section 4.2. The results in Section 4.3 are provided without much explanation. Something like Figure 2 reports the how the log of the Frobenius norm squared changes with STP+SFP, the number of non-zero entries in a precision matrix, for the different values of the tuning parameters. 
% \item Related to the above we need some more text for the figures. 
% \item Do we want to discuss why CRF is not used here?
% \end{enumerate}
% }

\subsection{Overview}
\label{pcen_setup}
In our first set of simulations, we focus on both estimation accuracy and sparsity detection in Gaussian graphical modeling using PCEN.  We generated data from $C=4$ classes, where the $c$-th class is generated from ${\rm N}_p(0, \Omega_{*c}^{-1})$ and $p \in \{ 20, 50, 100\}$.  By construction, the sparsity patterns of $\Omega_{*1}$ and $\Omega_{*2}$ will be nearly equivalent; as will the sparisty patterns of $\Omega_{*3}$ and $\Omega_{*4}$. However, the sparsity patterns of $\Omega_{*1}$ and $\Omega_{*2}$ will be distinct from the sparsity patterns of $\Omega_{*3}$ and $\Omega_{*4}$. 

We compare two version of PCEN, PCEN-2 and PCEN-3 (i.e., \eqref{pcen} with $Q =2$ and $Q=3$, respectively) to the fused graphical lasso (FGL, \citet{danaher_2014}), graphical lasso with the same tuning parameter for all classes (Glasso), and two versions of the method proposed by \cite{saegusa2016} which we call LASICH-OR and LASICH-PR (denoting ``oracle" and ``practical", respectively). The method proposed by \citet{saegusa2016} requires the network information between the classes to be known before fitting the precision matrices (i.e., ``oracle" information), though it may be estimated using hierarchical clustering. In this simulation, a network where the edges are $\{ (1,2), (1,3), (2,4), (3,4)\}$ is used. The  difference between LASICH-OR and LASICH-PR is that LASICH-OR applies weights of $10^{-3}$ to the edges in the set $\{(1,3),(2,4)\}$ while LASICH-PR weights all edges equally.  In this way, LASICH-OR effectively uses the fact that sparsity patterns are shared between classes ${1,2}$ and ${3,4}$, which would be unknown in practice, while still allowing for overlap between the two sparsity patterns. Thus, this can be considered a ``best-case'' version of the HC-LASICH method. %\textcolor{orange}{Based on this setup, we expect PCEN to perform competitively with LASICH, which shows the ability of our method to identify a useful relationship of the precision matrices and leverage that relationship in estimation and graph recovery.}\textcolor{red}{Brad, not really sure what this sentence means or adds -- this is pretty clear from what we've written before} 
Tuning parameters for each of the methods are investigated  based a subset of  $(\lambda_1,\lambda_2) \in \{10^{-10},10^{-9.9},\ldots,10^{9.9}, 10^{10}\} \times  \{10^{-3},10^{-1},10^{1}\} $ unless otherwise specified.

To evaluate performance of each estimator, we use the sum of true positive (STP) across all $C$ classes, which we define as
$
\sum_{c=1}^4 \sum_{(j,k)}{\rm I}\left([\Omega_{*c}]_{jk}\neq 0 \cap [\hat{\Omega}_c]_{jk}\neq 0\right),
$
where $\hat{\Omega}_c$ is an estimate of $\Omega_{*c}$.
% Similarly we also record the sum of the false positive (SFP) for all classes, which is defined as
% $$
% \sum_{c=1}^4 \sum_{(i,j)}{\rm I}\left([\Omega_{*c}]_{ij}\neq 0 \cap [\hat{\Omega}_c]_{ij}= 0\right).
% $$
In addition, we also report the sum of the Frobenius norm squared error which is defined as
$
\sum_{c=1}^4 \|\Omega_{*c}-\hat{\Omega}_c\|_F^2.
$

% \bfblue{I don't understand how these weights are used. If the LASICH method is easy to write down it might be worth doing that, so the reader doesn't have to go to another paper to understand this paragraph.} Tuning parameters for each of the methods are investigated  based a subset of  $(\lambda_1,\lambda_2) \in \{10^{-10},10^{-9.9},\ldots,10^{9.9}, 10^{10}\} \times  \{10^{-3},10^{-1},10^{1}\} $ unless otherwise specified.  An R package will be made available with the cluster fusion methods upon publication of this article. 

% \subsection{Data generating mechanism}
In each replication, the training data consists of $n$ independent draws from each of the class distributions, i.e., $4n$ total realizations.  We investigate three different settings each based on Erdos-Renyi graphs. Throughout the settings we consider, we define $E(A,p)$ to be a $p \times p$ matrix where $A$ is an adjacency matrix associated with an Erdos-Renyi graph. To generate the elements of $E(A,p)$, we randomly assign each of the non-zero elements of $A$ a value from the set $(-0.7,-0.5)\cup (0.5,0.7)$.  Each off diagonal element is normalized by $1.5$ times the row sum of the matrix, and each diagonal element is set to 1. The matrix is then scaled such that the associated variance of each of the $p$ variables is 1. Further, we define $R(A,\Omega_*,V )$ to be a $p \times p$ matrix that is generated using the adjacency matrix $A$, such that nonzero elements are equal to the corresponding value in $\Omega_*$ plus a randomly selected value from the set $V$.  The off-diagonal elements are normalized by $1.5$ times the row sum of the matrix,  the diagonal elements are set to 1.  Finally, the entire matrix is normalized such that the variance of each variable is 1. Similar data generating mechanisms have been used in \cite{danaher_2014} and \cite{saegusa2016}.

\subsection{Two clusters, block Erdos-Reyni graphs}
\label{pcen_sim2}
We first compare PCEN-2 and PCEN-3 to competing methods under block Erdos-Reyni graphs. Each $(p,\lambda_1,\lambda_2)$ described in Section \ref{pcen_setup} is replicated 50 times with $n=200$. In this setting, we generate $\Omega_{*1}$ to be block diagonal with each block of size $p/2 \times p/2$. The first block is generated using $U=E(A_1,p/2)$, and the second is generated using $L=E(A_2,p/2)$ where $A_1$ and $A_2$ are adjacency matrices associated with independent Erdos-Reyni graphs with $p/2$ edges.  Using $\Omega_{*1}$, we generate $\Omega_{*2}$ such that it is block diagonal with block size $p/2 \times p/2$.  We define the upper block of $\Omega_{*2}$ as $R\left(A_3, L, (-.01,.01)\right)$, and the lower block to be $R\left(A_4, U, (-.01,.01)\right)$ where $A_3$ is the adjacency matrix $A_1$ with four edges removed.  Similarly $A_4$ is the adjacency matrix $A_2$ with 4 edges removed. Hence, $\Omega_{*1}$ and $\Omega_{*2}$ have nearly equivalent sparsity patterns minus eight nonzero entries in $\Omega_{*1}$ which are zero in $\Omega_{*2}$.

To generate $\Omega_{*3}$ we randomly select $p/2$ variables and define this set of variables as $s_1$ and define $s_2=\{1,\ldots,p\}\setminus s_1$.  The submatrix of $\Omega_{*3}$ corresponding to the indices in $s_1$ are generated such that $G=E(A_5, p/2)$ and submatrix of $\Omega_*$ corresponding to the indices in $s_2$ is generated such that $H=E(A_6,p/2)$, where $A_5$  and $A_6$ are independent Erdos-Renyi graphs with $p/2$ edges. The submatrices of $\Omega_{*4}$ corresponding to the indices in $s_1$ and $s_2$ are generated using $R\left(A_7, G, (-.01,.01)\right)$ and $R\left(A_8, H, (-.01,.01)\right)$, respectively.  The adjacency matrices $A_7$ and $A_8$ are the same as $A_5$ and $A_6$, respectively, with 4 randomly selected edges removed in each.  

The results in panels (a) and (b) of Figure \ref{fig:ggsim_2} are average log sum of squared Frobenius norm error and the average true positive rate as the number of non-zero elements in the estimated precision matrices varies with $p=100$. The results for the case of $p=20$ and $p=50$ can be found in the Supplementary Material. The results in panels (a) and (b) of Figure \ref{fig:ggsim_2} suggest that PCEN-2 can outperform as well or better than competitors in terms of Frobenius norm error and graph recovery. Notably, some versions of PCEN-2 outperform LASICH-OR both in terms of log sum of squared Frobenius norm and TPR, even though LASICH-OR knows the relations between precision matrices \textit{a priori}. 

In the Supplementary Material, we present additional simulation results examining the effect of sample size and $\lambda_2$ on the performance of PCEN-2. Briefly, as one would expect, as the sample size increases, the performance of PCEN-2 improves. In general, as $\lambda_2$ increases, the performance also improves.

%Nevertheless, the method that performs the best in a majority of the settings is LASICH-OR, which is unsurprising as this method uses oracle information about similarity of precision matrices. Nonetheless, PCEN-2 is competitive with LASICH-OR in both graph recovery, and estimation error without knowing the structure \emph{a priori}. 

\begin{figure}
\centering
\centerline{\includegraphics[width=.85\linewidth]{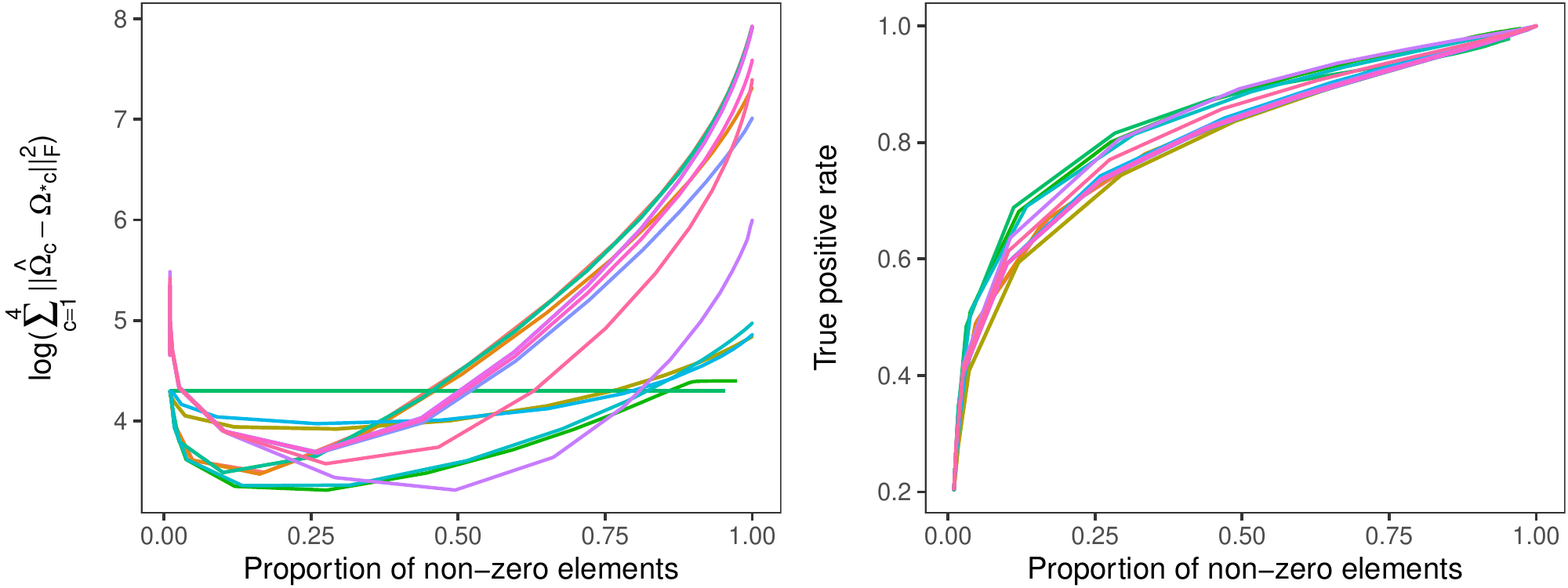}}
\centerline{\hfill\makebox{(a)}\hfill\quad\quad\makebox{(b)}\hfill}
\vspace{3pt}
\centerline{\includegraphics[width=.85\linewidth]{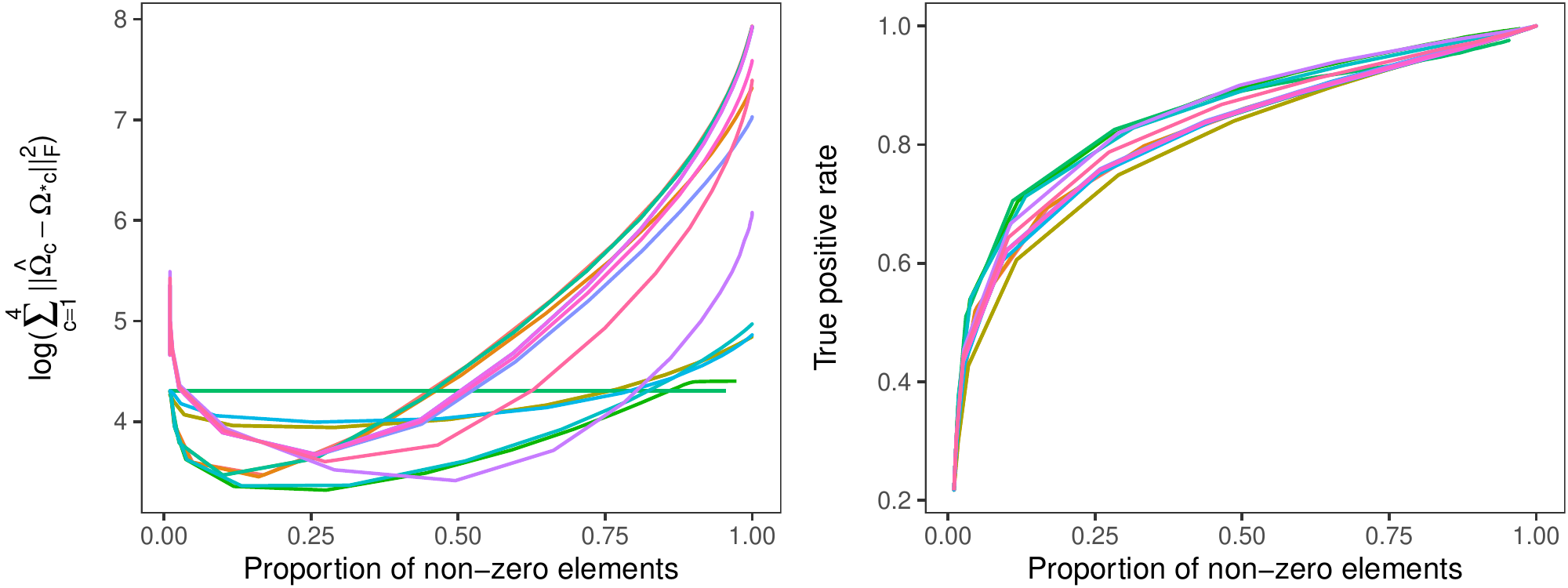}}
\centerline{\hfill\makebox{(c)}\hfill\quad\quad\makebox{(d)}\hfill}
\vspace{3pt}
\centerline{\includegraphics[width=.85\linewidth]{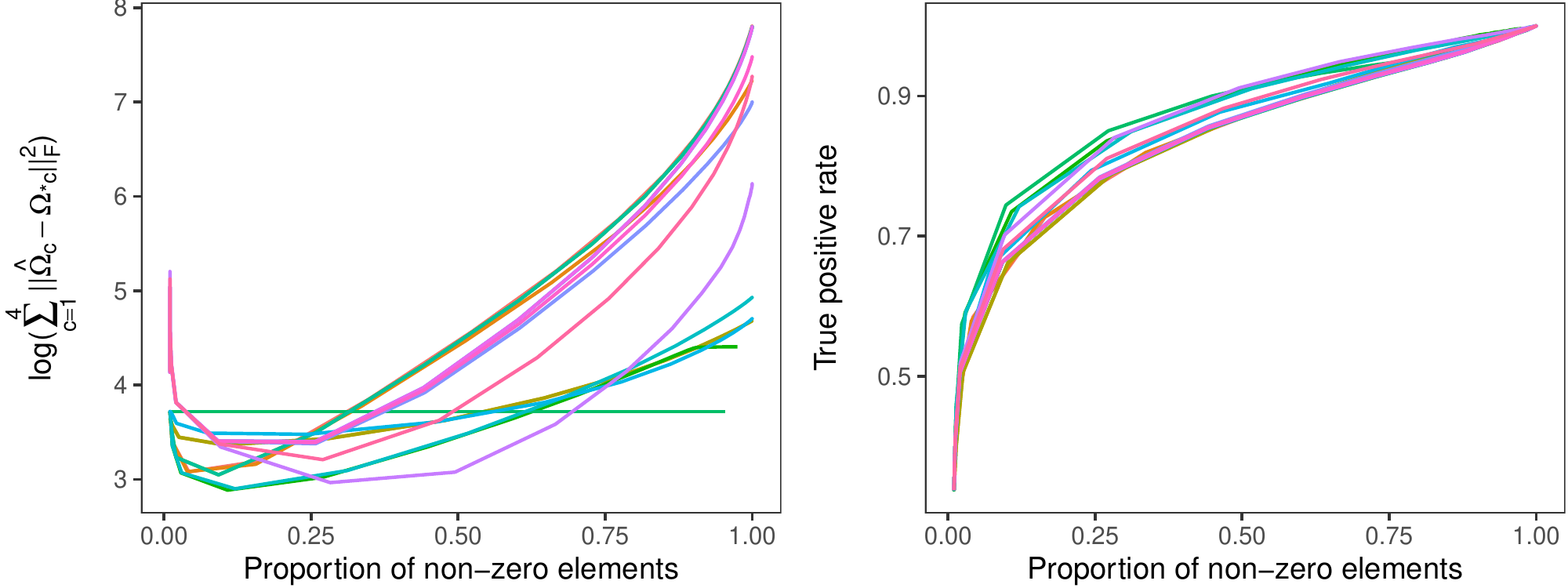}}
\centerline{\hfill\makebox{(e)}\hfill\quad\quad\makebox{(f)}\hfill}
\vspace{3pt}
\includegraphics[width=16cm]{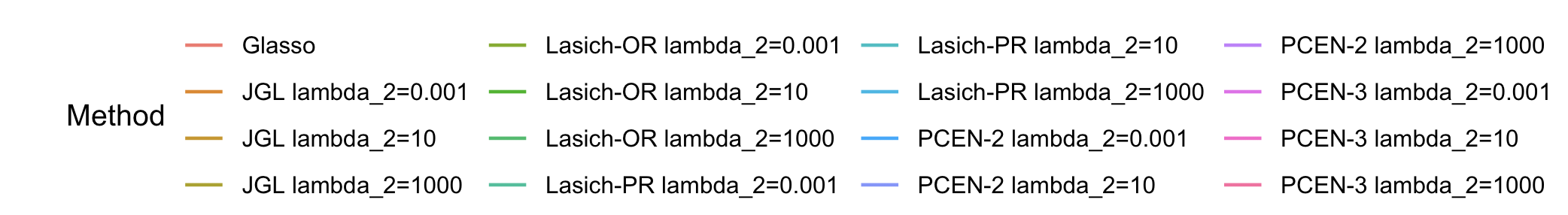}
\caption{Results for the simulation setting described in (a,b) Section \ref{pcen_sim2}, (c,d) Section \ref{pcen_sim3}, and (e,f) Section \ref{pcen_sim4} when $p=100$.  Each line represents the average of 50 replications of the denoted method when $\lambda_2$ is fixed, and $\lambda_1$ varies. }
\label{fig:ggsim_2}
\end{figure}

\subsection{Two clusters, block diagonal Erdos-Renyi graphs}
\label{pcen_sim3}

In contrast to the data generating models in Section \ref{pcen_sim2}, in these simulations we consider settings where all four precision matrices have a high degree of shared sparsity with high probability. We generate $\Omega_{*1}$ such that it is block diagonal with each block size  of $p/2 \times p/2$.  The first block is generated using $U=E(A_1,p/2)$, and the second block is generated from $L=E(A_2,p/2)$ where $A_1$ and $A_2$ are adjacency matrices associated with independent Erdos-Reyni graphs, with $p/2$ edges.  Using $\Omega_{*1}$ we generate $\Omega_{*2}$ such that it is block diagonal with block size $p/2 \times p/2$.  We define the upper  block of $\Omega_{*2}$ as $R\left(A_3, L, (-.01,.01)\right)$, and the lower block to be $R\left(A_4, U, (-.01,.01)\right)$ where $A_3$ is the adjacency matrix $A_1$ with four edges removed.  Similarly $A_4$ is the adjacency matrix $A_2$ with $.2p/2$ edges removed. Next, $\Omega_{*3}$ is generated in a similar way to $\Omega_{*1}$ and $\Omega_{*4}$ is generated from $\Omega_{*3}$ in the same fashion $\Omega_{*2}$ is generated from $\Omega_{*1}$.  By generating precision matrices in this way, entries not in the upper or lower block submatrix are zero in all four precision matrices.

The results in panels (c) and (d) Figure \ref{fig:ggsim_2} are average log sum of squared Frobenius norm error and the average true positive rate as the number of non-zero elements in the precision matrices varying with $p = 100$ and $n = 200$. The results for the case of $p=20$ and $p=50$ can be found in the Supplementary Material. These results show a similar pattern to the results from the simulation studies in Section \ref{pcen_sim2}. For large values of $\lambda_2$, PCEN-2 is competitive in Frobenius norm error and graph recovery with the all other methods, most notably LASICH-OR.  As mentioned, LASICH-OR has oracle knowledge of the true relationships between precision matrices, while PCEN is estimating the relationships as well as the precision matrices.

\subsection{Two clusters, block diagonal structures}
\label{pcen_sim4}
In the final setting, we again assume a data generating model where the four precision matrices are divided into two groups. We generate $\Omega_{*1}$ such that it is block diagonal with each block size of $p/2 \times p/2$.  The first block is generated using $U=E(A_1,p/2)$, and the second block is the identity matrix,  where $A_1$ is an adjacency matrix from an Erdos Renyi, with $p/2$ connections.  Using $\Omega_{*1}$ we generate $\Omega_{*2}$ such that it is block diagonal with block size $p/2 \times p/2$. We define the upper  block of $\Omega_{*2}$ as $R\left(A_3, L, (-.01,.01)\right)$, and the lower block to be the identity where $A_3$ is the adjacency matrix $A_1$ with four edges removed.  Next, $\Omega_{*3}$ is generated in a similar way to $\Omega_{*1}$ and $\Omega_{*4}$ is generated from $\Omega_{*3}$ in the same fashion $\Omega_{*2}$ is generated from $\Omega_{*1}$. 

The results in panels (e) and (f) Figure \ref{fig:ggsim_2} are average log sum of squared Frobenius norm error and the average true positive rate as the number of non-zero elements in the precision matrices varying with $p = 100$ and $n = 200$. The results for the case of $p=20$ and $p=50$ can be found in the Supplementary Material.  Results exhibit a similar pattern to the results displayed in Sections \ref{pcen_sim2} and \ref{pcen_sim3}.  For certain values of $\lambda_2$, PCEN-2 is competitive in estimation and graph recovery with the other methods, specifically LASICH-OR.  As $p$ increases, we see the estimation and graph recovery of PCEN decreases relative to LASICH-OR, but is still competitive with other competitors. Again, this can be attributed to LASICH-OR having oracle information and its use of the group penalty which exploits similar sparsity patterns across all precision matrices.

\section{Quadratic discriminant analysis simulations studies}\label{sim_3}

%\bfblue{The figures in this section reference Simulation 1 and Simulation 3. I only see two simulations in this section and neither are numbered.}

%\subsection{Data generating model and performance measurements}

%\bfred{Next we investigate  comparing CRF for use in discriminant analysis. Based on results from Sections \ref{block_diag_sim} and \ref{pcen_sim2} we omit PCEN from this simulation.} %\bfblue{I think we have to be careful about how to phrase this. Are we really omitting PCEN because of results in the previous section or did we get bad results for PCEN in this section and then decide to omit them?} 
In this section, we study CRF as a method for fitting the QDA model. We generate data from $C=4$ classes, where predictors for the $c$-th class are generated from ${\rm N}_p(\mu_{*c}, \Sigma_{*c})$ with $p \in \{ 20, 50\}$.  The training data consists of 25 independent realizations from each class. Tuning parameters are selected using 5-fold cross-validation maximizing the validation likelihood (see Supplementary Material).  We measure classification accuracy to compare methods. To quantify classification accuracy, we generate an independent testing set consisting of 500 observations from each of the $C=4$ classes. 

In addition to CRF, RF, and RDA \citep{friedman_1989}, we include two methods which have oracle knowledge of the population parameters: Oracle, which uses $\Omega_{*c}$ and $\mu_{*c}$ in the classification rule; and TC (for ``true covariance"), which uses $\Omega_{*c}$ and the sample means in the classification rule. These oracle methods provide a benchmark for classification accuracy in these data. We omit the sparse methods discussed in Section \ref{ggsim} as we study a class of dense precision matrices in this particular simulation study.

% \subsection{Clustered and dense precision matrices}\label{sim_3}
% %In this simulation, $C=4$ and create a data generating model such that there are two sets of clustered inverse covariance matrices where each cluster is of a different structure 
% %and each inverse covariance matrix is dense.  
We consider a situation where each of the two clusters has a distinct structure and precision matrices in both clusters are dense. For 100 independent replications, 
we generate $Z_{3} \in \Real^{100 \times p}$ where each row is an independent realization of ${\rm N}_p(0,I)$ and let $V_{3}$ be the right singular vectors of $Z_{3}$. We then let $\Sigma_{*1}=V_{3}^TH_3V_{3}$ and $\Sigma_{*2}=V_{3}^TH_4V_{3}$ where $H_3$ and $H_4$ are diagonal matrices with the $j$th element equal to $D(1000,100,j)$ and 
$D(999,99,j)$ respectively. Define the $(j,k)$th element of $(\Sigma_{*3})_{j,k}= 1(j=k)+ 0.45 \cdot 1(|j-k|=1)$ and $(\Sigma_{*4})_{j,k}= 1(j=k)+ \rho\cdot 1(|j-k|=1)$ where $1(\cdot)$ is the indicator function. We consider $(p,\rho) \in \{20,50\} \times \{0.40,0.47,0.50\}$. Finally, we set all elements of $\mu_{*1} = 20\log(p)/p$, $\mu_{*2} = -10\log(p)/p$, $\mu_{*3}=10\log(p)/p$, and 
$\mu_{*4} = -20\log(p)/p$.  A similar data generating model was used in \citet{price_2015}.
We expect CRF to perform well in this setting as it should be able to identify the distinct clusters, while RDA and RF implicitly assume similar structures across all precision matrices.

Table \ref{sim3_fig} presents a comparison of the classification error rate, and demonstrates that CRF out performs RDA and RF for every $(p,\rho)$ combination.  Interestingly, in the case that $p=20$, CRF performs nearly as well as TC, which uses the true covariance matrices. 

In the Supplementary Material, we provide additional simulation study settings and results under clustered, dense, and ill-conditioned precision matrices. 

\begin{table}
\caption{Results of simulation described in Section \ref{sim_3} comparing classification error rates and standard errors of CRF, RDA, RF and the two oracle methods for $(p,\epsilon)\in \{20,50\}\times\{1.0\}$.}
\label{sim3_fig}
\centering
\scalebox{.85}{
\fbox{
\begin{tabular}{c |c c c c c | c c c c c}
& \multicolumn{5}{c|}{$p=20$} & \multicolumn{5}{c}{$p=50$} \\
& RF & CRF & RDA & Oracle & TC & RF & CRF & RDA & Oracle & TC\\
\hline
\multirow{2}{*}{$\rho = 0.40$} &  0.237 & 0.106 & 0.237 & 0.015 & 0.108 & 0.238 & 0.130 & 0.238 &  0.005 & 0.075\\
 &  (0.001) & (0.003) & (0.001) &  (0.002) &  (0.013) & (0.001) & (0.002) & (0.001) & (0.000) & (0.010)\\
 \hline
 \multirow{2}{*}{$\rho = 0.47$} &  0.238 & 0.113  & 0.237 & 0.015 & 0.090  & 0.238 & 0.130 & 0.238 &  0.005 & 0.075\\
 & (0.002) & (0.004) & (0.001) & (0.002) & (0.012) & (0.001) & (0.002) & (0.001) & (0.002) & (0.010)\\
 \hline
 \multirow{2}{*}{$\rho = 0.50$} &  0.238 & 0.111 & 0.236 & 0.103 & 0.108 & 0.238 & 0.130 & 0.238 &  0.005 & 0.075\\
 &  (0.002) & (0.004) & (0.001) &  (0.002) &  (0.012) & (0.001) & (0.002) & (0.001) & (0.002) & (0.010)\\
\end{tabular}
}
}
\end{table}

% \begin{figure}
% \begin{center}
% \includegraphics[width=5in]{sim3_results.pdf}
% \end{center}
% \caption{Results of the simulation described in Section \ref{sim3} comparing classification error rates of CRF, RDA, Ridge Fusion and Oracle methods for $(p,\rho) \in \{20,50\} \times \{ 0.40,0.47,0.50\}$.}
% \label{sim3_fig}
% \end{figure}

\section{Data Examples}

%\bfblue{Should we explain why we are using PCEN for one data set and CRF for another?}

\subsection{Gene Expression from Pulmonary Hypertension Patients}
\citet{pulmonary} collected gene expression profiles of 30 idiopathic pulmonary arterial hypertension patients (IPAH), 19 systemic sclerosis patients without pulmonary hypertension (SS w/o PH), 42 scleroderma-associated pulmonary arterial hypertension patients (SPAH), 8 systemic sclerosis patients with interstitial lung disease and pulmonary hypertension, and 41 healthy individuals, for a total of 140 individuals. The collected gene expression profiles consist of data from 49,576 probes. We scaled each probe to have a median of 256 and then performed a $\log_2$ transformation. Next, we scaled and centered the log transformed data to have mean zero and a standard deviation of one. Our analysis was focused on 132 individuals, excluding the 8 systemic sclerosis patients with interstitial lung disease and pulmonary hypertension, and 132 gene expression probes. The 132 probes we used were selected by running a one-factor ANOVA for each probe, using disease type as the factor, and then selecting the 132 probes with the smallest p-values. 

After this processing, we fit the PCEN model to the normalized data. The PCEN shrinkage tuning parameters were selected to promote sparsity in the graph and similarity between the graphs, similar to the procedure of \citet{danaher_2014}. We investigated the use of $Q=2$ and $Q=3$ clusters for these data. In both settings, PCEN was able to differentiate between the controls and patients with hypertension. In the case of two clusters, IPAH, SPAH and SS w/o PH are placed into a cluster while the control group is isolated in the second cluster. In the case of three clusters IPAH and SS w/o PH are placed into a cluster, while SPAH and the control group are both their own cluster of size one.  

\begin{figure}
\begin{center}
\includegraphics[width=4in]{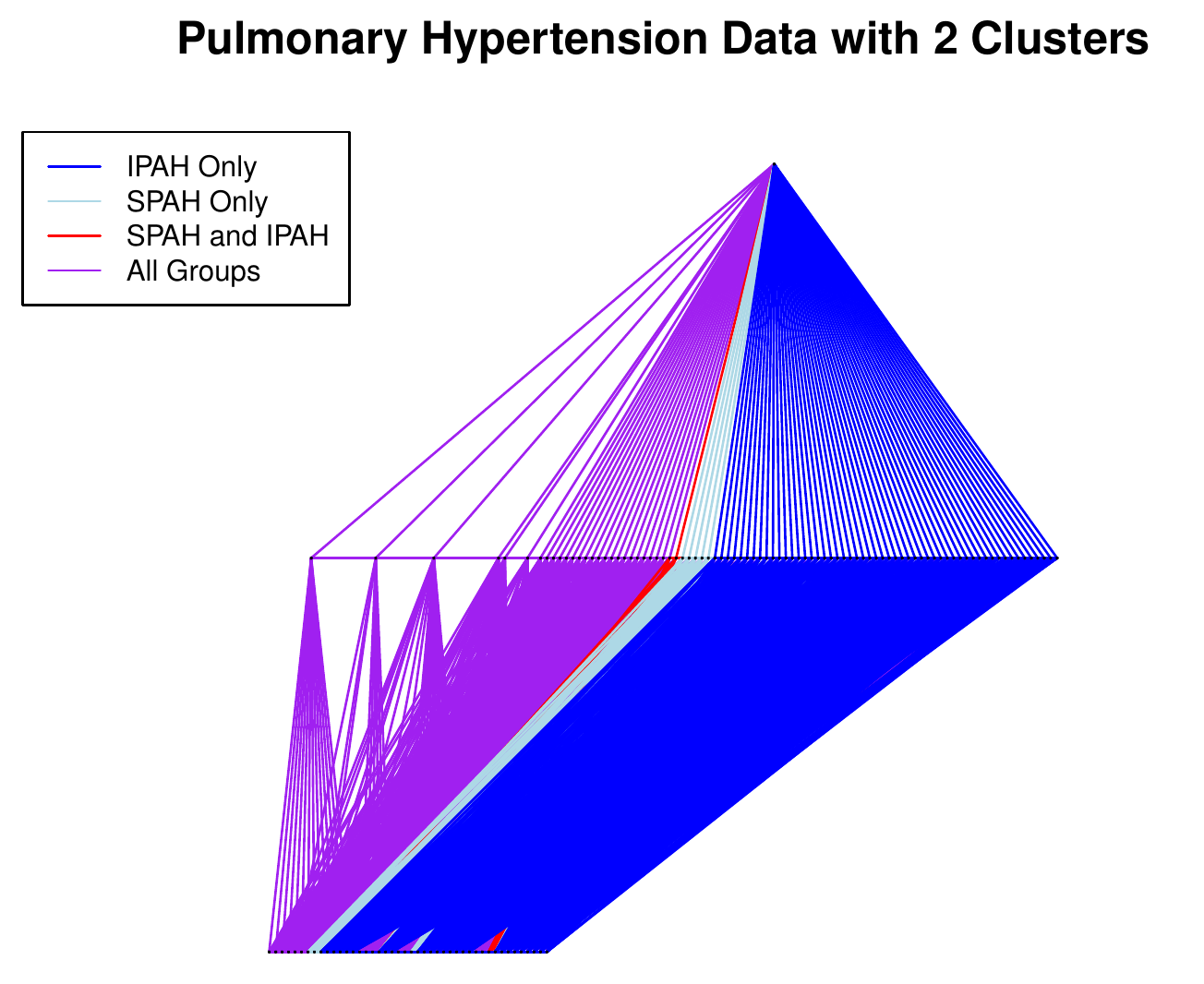}
\end{center}
\caption{Resulting network comparison from PCEN applied to the Pulmonary Hypertension Patients Data using $Q=2$ clusters.}
\label{2cluster_fig}
\end{figure}

Figure \ref{2cluster_fig} displays the corresponding network structures found using PCEN with $Q=2$ cluster. A similar plot for $Q=3$ is displayed in the Supplementary Material. In Figure \ref{2cluster_fig}, the blue edges represent probes that are related and were only found in patients diagnosed with IPAH, while light blue edges correspond to related probes found only in patients diagnosed with SPAH. Red edges denote relationships between probes that could be found in patients who were diagnosed with SPAH and those patients who were diagnosed with IPAH. Purple edges denote relationships between probes that could be found in patients who were diagnosed with SPAH and those patients who were diagnosed with IPAH and those who were diagnosed with SS w/o PH.  Table \ref{PH_table} presents the number of edges that appear in only IPAH and SPAH, and then the edges that are present in both graphs.

At first inspection, the results between the cases of $Q=2$, shown in Figure \ref{2cluster_fig}, and $Q=3$, presented in the Supplementary Material, appear similar, but there are very notable differences.  When $Q=3$ and SPAH belongs to its own cluster, we see that the number of shared edges between all groups decreases, which is expected. The other differences, which are quantified in Table \ref{PH_table}, can be attributed to the changing cluster structure and may have important biological implications. 

%the results in Table \ref{PH_table} show that there are more edges identified when we assume two clusters instead of three clusters. 
 %The number of edges is relatively large, which could speak the the inability of PCEN to accommodate different sparsity levels similar to that shown in Section \ref{pcen_sim2}. %\bfblue{I think we should clearly define what the numbers in Table \ref{PH_table} represent. Another way of saying this is, I don't have an exact sense of what they currently represent.}\bfred{(I tried to address this with the red above)}.  

\begin{table}
\caption{A comparison of network differences produced by PCEN using 2 and 3 clusters for the Pulmonary Hypertension Patients Data.  The values in the table are the number of edges that are present only in IPAH, SPAH, or are present in both.}
\label{PH_table}
\begin{center}
\fbox{
\begin{tabular}{l|cccc|c}
& IPAH & SPAH & IPAH  and SPAH & All Groups &  Total \\ 
\hline 
2 Clusters & 1636  & 354  & 96  &  2114 & 4200\\ 
3 Clusters & 1431 & 467   & 143  & 1918 & 3959\\ 
\end{tabular} 
}
\end{center}
\end{table}

\begin{table}
\caption{Classification results from the Libras Data example.  }
\label{libra_table}
\begin{center}
\fbox{
\begin{tabular}{c|c|c|c|c}
Method & CRF & RDA & Ridge & Ridge Fusion \\ 
\hline 
Error rate & 13/60 & 20/60 & 51/60 & 51/60 \\ 
\end{tabular} 
}
\end{center}
\end{table}

\subsection{Libras Data Example}
To further demonstrate the useful of our proposed method,  we apply CRF to a classification problem based on the Libras data set from the UCI Machine Learning repository \citep{UCI}.  These data contain 15 classes, each of which corresponds to a videoed hand movement of Brazilian sign language. Each hand movement was recorded at 45 distinct time frames and the coordinates on an $x-y$ plane were documented, which results in 90 predictor variables for the hand movement. Each of the 15 classes has 24 observations for a total of 360 observations.  Training was done using 20 randomly selected observations from each class, and testing was done on the four remaining observations. Our test and training sets are available in the Supplementary Material. We compare four methods: CRF, RF, ridge penalized normal likelihood precision matrix estimation, and RDA. The ridge penalized normal likelihood precision matrix estimator is equivalent to CRF with $\lambda_2 = 0$.  Tuning parameters were selected by five-fold cross-validation maximizing a validation likelihood for all likelihood based methods. In the case of CRF, the number of clusters was chosen from the set of integers ranging from 2 to 10.  For the non-likelihood method, RDA, we selected tuning parameters by five-fold cross validation minimizing the misclassification rate. 

%The ridge penalized precision matrix estimation, %was done with 
%is the ridge fusion method with the fusion tuning parameter set to 0. %Five fold cross validation with validation likelihood was used for tuning parameter selection for the penalized likelihood methods and RDA-VL. Five fold cross validation using CER was used to select the RDA-CER tuning parameters.  

Table \ref{libra_table} contains the classification error rate for each of the five methods on the testing data. The CRF method out performs the other methods with regards to classification rate and detects two clusters.  Further investigation shows that for CRF 9 out of 15 of the classes had a CER of 0.

\section{Acknowledgments}
This work has been supported in part by NSF MRI Award \# 11726534.  

\bibliographystyle{jcgs}
\bibliography{cluster_da}

\section{Appendix}

%\bfblue{In the main text if we want the $L_2$-norm of $A$ we use $|A|_2$, but in the proofs (and some theorems) we use $||A||_2$. Is there any difference between the two? If so what? If not, we should be consistent with our notation.}\bfred{ We define $|A|_2^2$ to be the sum of squares of the vectorized matrix $A$ early on.  The $\|A\|_2$, is the matrix 2-norm. }

%\bfblue{Okay that explains some of my confusion. However, I think this is confusing notation. Some people use $|\mathbf{a}|$ and others will use $||\mathbf{a}||$ for the norm of a vector (or I suppose matrix). In addition, it seems odd to me that sometimes we are using $||A||_F$ and other times using $|A|_2$ to represent the same thing. I know why this happened, as we were combining different notation from your Ridge fusion paper and the ISTA paper. However, before we submit I think we need to make the notation consistent within the paper. If we want to continue with this notation, I think we need to explicitly define everything and do it in the same part of the paper. My vote would be to move everything to the $||A||$ format and then clearly define what $||A||_q$ and $||A||_F$ means for a matrix and what $||\mathbf{a}||_q$ means for a vector.}

%\bfred{Let's go with the $\| \|_q$ notation for everything.  That just means some small changes in the early stuff but we can do that quicker than fixing all these proofs.}

\subsection{Proof of Lemma 1}
\label{lemma1proof}

%\bfblue{Throughout this proof we sometimes have $\hat{Z}$ and sometimes have $\hat{Z}_c$. Are they different things? I don't think they are, but not 100\% sure.}\bfred{Should be fixed}

In this section we show that the optimal solution of \eqref{reform} is contained on a compact subset of $\Sp_+^p$. To gain a deeper understanding of the solution we obtain the dual form of \eqref{reform}.  Define $Z_c$ to be a symmetric $p \times p$ matrix, then \eqref{reform} can be rewritten as 

\begin{equation*}
\min_{\Omega_c \in \Sp_+^p} \tr(\tilde{S}_c\Omega_c)-\log\det(\Omega_c)+\gamma_{c1}\max_{|vec(Z_c)|_\infty <1} \tr(Z_c\Omega_c)+
\gamma_{c2}|\Omega_c|_2^2.\\%, \\
\end{equation*}
%\bfpurple{where $Z$ is constrained such that the maximum absolute value of any element is less than or equal to 1.}\bfblue{That is clear by the definition. If we want to keep previous sentence I believe it should be $Z_c$ not $Z$.} 
Just as in \cite{banerjee08} we exchange the max and min to obtain the dual problem,
\begin{equation}
\label{dual_1}
\max_{|vec(Z_c)|_\infty <1} \min_{\Omega_c \in \Sp_+^p}  \tr((\tilde{S}_c+\gamma_{c1} Z_c)\Omega_c)-\log\det(\Omega_c)+\gamma_{c2}|\Omega_c|_2^2.
\end{equation}
Notice that the optimization problem in \eqref{dual_1} with respect to $\Omega_c$ is just a ridge penalized precision matrix estimation problem with tuning parameter $\gamma_2$ which was first investigated  by \citep{witten09}.  Define 
$$
Q(A,\eta)=\argmin_{\Theta \in \Sp_+^p} \tr(A\Theta)-\log\det(\Theta)+\eta|\Theta|_2^2,
$$
and $\dot{\Omega}_{Z_c}=Q(\tilde{S}_c+\gamma_{c1}Z_c,\lambda_2)$, then the dual problem can then be rewritten as
\begin{equation}
\label{dual_2}
\max_{|vec(Z_c)|_\infty <1}  \tr((\tilde{S}_c+\gamma_{c1} Z_c)\dot{\Omega}_{Z_c})-\log\det(\dot{\Omega}_{Z_c})+\gamma_{c2}|\dot{\Omega}_{Z_c}|_2^2.
\end{equation}

%\bfpurple{Note that even the evaluation of the dual objective function, $Z_c$ is known, requires the calculation of a ridge penalized solution which is $\mathcal{O}(p^3)$ because $Q()$ requires a spectral decomposition.} 

%\textcolor{red}{This next part in blue should be cut -- no need to point out these issues unless someone raises this questions} \textcolor{blue}{Note that the evaluation of the dual objective function, when $Z_c$ is known, requires solving a ridge penalized problem which is $\mathcal{O}(p^3)$ because $Q()$ requires a spectral decomposition. This means calculating the duality gap to check convergence at each step will be computationally inefficient and is why we propose simply checking the change in the objective function.   }

%\bfblue{The} dual problem provides insights on properties of the optimal solution of \eqref{reform}, specifically on the bounds of the eigenvalues. \bfblue{Define $\rho_{j}(A)$ to be the $j$th eigenvalue of matrix $A$.}  

Now we are able to show the result.

\begin{proof}
%\bfblue{We could make the below definition before the Lemma, that way we don't have to define it in the Lemma and then again introduce this general notation in the proof.}
Define %$\rho_{j}(A)$ to be the $j$th eigenvalue of matrix $A$ and 
$$
q(a,\eta)=\frac{-a+\sqrt{a^2+4\eta}}{2\eta}.
$$ 
Note that $q(a,\eta)>0$ for all $a \in \Real$ when $\eta>0$ and given some $b \in \Real$ such that $b<a$ then  $q(a,\eta)>q(b,\eta)>0$. Let $\widehat{Z}_c$ be the solution to \eqref{dual_2}. Then we are able to rewrite $\Omega_c^{*}=\dot{\Omega}_{\widehat{Z}_c}=Q(\tilde{S}_c+\gamma_{c1}\widehat{Z}_c,\gamma_{c2})=V\widehat{D}V^T$, where $V$ is a matrix of the eigenvectors of $\tilde{S}_c+\gamma_{c1}\widehat{Z}_c$, and $\widehat{D}$ is a diagonal matrix where the inverse of the $j$th diagonal element is equal to $q\left(\rho_j(\tilde{S}_c+\gamma_{c1}\widehat{Z}_c),\gamma_{c2}\right)$ \citep{witten09}. To complete the proof all that is left to do is bound the cases of $j=1$ and $j=p$.
%\bfblue{I'm lost. Now admittedly I'm not very familiar with literature on what we are trying to do, but it is not clear to me why we can do any of the above. My intuition is we are 
%\begin{enumerate}
%\item Pulling the ridge penalty into the normal likelihood. 
%\item Using some known results about covariance estimation with a Lasso penalty. 
%\end{enumerate}
%However, how to do step 1 is not clear to me as the equivalence between traces and log determinants and the $L_2$-norm squared is not clear to me. I think we should be citing something for point 2, if that is indeed what we are doing.}
%\bfred{This is standard from the ridge precision matrix estimation papers.  All we are doing here is defining what the ridge solution is formally because we're going to use it later.} \bfblue{Isn't this an elastic net solution, not the ridge solution? Is that result standard too? If we don't want to go through the derivations I think we should provide references to other papers that do. This would be helpful for a reader, such as myself, that wants to understand our result but has not read much literature on ridge solutions etc.}\bfred{I added the witten citiation, but I think this kind of lays out the argument better.} 

Weyl's Theorem provides the inequalities
$$
\rho_{p}(\tilde{S}_c)+\gamma_{c1}\rho_{p}(\widehat{Z}_c)\leq \rho_{p}(\tilde{S}_c+\gamma_{c1}\widehat{Z}_c),
$$
and 
$$
\rho_{1}(\tilde{S}_c+\gamma_{c1}\widehat{Z}_c)\leq \rho_{1}(\tilde{S}_c)+\gamma_{c1}\rho_{1}(\widehat{Z}_c).
$$

Finally note the inequality 
$$
-p\leq -\|\widehat{Z}_c\|_2  \leq \rho_p(\widehat{Z}_c)\leq \rho_1(\widehat{Z}_c)\leq \|\widehat{Z}_c\|_2 \leq p.
$$
%\bfblue{Why is the above true? I know that the largest element of $Z$ is smaller than 1. Not clear to me how that provides these bounds. Also does $Z$ depend on $c$. In equation (16) it appears to, but not in this proof.}  \bfred{This stems from 
%$$
%\|Z\|_2=\sqrt{\rho_1(Z^TZ)}=\max(|\rho_1(Z)|,|\rho_p(Z)|)\leq p\|Z\|_{\max}=p\|vec(Z)\|_{\infty}\leq p
%$$
%The negative follows directly.}

Combining all results this leads to the inequality
$$
0<q(\rho_{p}(\tilde{S}_c)-\gamma_{c1}p,\gamma_{c2})\leq q(\rho_{p}(\tilde{S}_c)+\gamma_{c1}\rho_{p}(\widehat{Z}_c),\gamma_{c2})\leq q(\rho_{1}(\tilde{S}_c)+\gamma_{c1}\rho_{1}(\widehat{Z}_c),\gamma_{c2}) \leq q(\rho_{1}(\tilde{S}_c)+\gamma_{c1}p,\gamma_{c2})<\infty.
$$
These resulting bounds are for eigenvalues of $(\Omega^{*})^{-1}$, and by inverting the bounds you obtain the results of the lemma.

\end{proof}

\subsection{Lipschitz Continuity of the $\bigtriangledown f(\Omega)$}

\begin{proof}
Assume $0<aI \preceq \Omega_A,\Omega_B \preceq bI$, for some $\Omega_A,\Omega_B \in \Sp_+^p$.  By Lemma 2 of \citet{rolfs12} we have that 
$$
\frac{1}{b^2}\|\Omega_A-\Omega_B\|_2 \leq \| \Omega_A^{-1}-\Omega_B^{-1}\|_2 \leq \frac{1}{a^2}\|\Omega_A-\Omega_B\|_2.
$$

Finally we have

\begin{eqnarray*}
\| \bigtriangledown f(\Omega_A)- \bigtriangledown f(\Omega_B)\|_F &=& \|\Omega_B^{-1}-\Omega_A^{-1} +2\gamma_{c2}(\Omega_A-\Omega_B)\|_F\\
&\leq& \sqrt{p}\|\Omega_B^{-1}-\Omega_A^{-1} +2\gamma_{c2}(\Omega_A-\Omega_B)\|_2\\
&\leq& \sqrt{p}\|\Omega_B^{-1}-\Omega_A^{-1}\|_2 +2\sqrt{p}\gamma_{c2}\|\Omega_A-\Omega_B\|_2\\
&\leq& \frac{\sqrt{p}}{a^2}\|\Omega_A-\Omega_B\|_2+2\sqrt{p}\gamma_{c2}\|\Omega_A-\Omega_B\|_2\\
&\leq&\left( \frac{\sqrt{p}}{a^2}+ 2\sqrt{p}\gamma_{c2}\right)\|\Omega_A-\Omega_B\|_F.
\end{eqnarray*}
\end{proof}

%\bfblue{Results below depend on properties of a proximity operator. You mention a proximity operator in response to a previous comment. However, we never
%define proximity operators. We don't define what it means for linear convergence, which makes it hard to understand what an optimal worst-case bound means. I am concerned that to understand our presentation of the results that extend \citet{rolfs12} is that you have to first read \citet{rolfs12}.}  \bfred{We can define the concept of linear convergence in the theory section no problem, also we could do a small discussion of the ISTA algorithm just as they did in \citet{rolfs12}, but we say in the discussion that our method is a variant of that for this problem.  At this point I'm not even sure we extend their results, but provide new results.  I need to think about this more}. \bfred{I think we addressed this in this version now, but check to make sure you're happy with it.}
\subsection{Proof of Theorem \ref{our_alg_conv}}

First, we will provide a general result that we use to establish the linear convergence rate of our algorithm.  
\begin{lemma}
\label{diff_bound}
Assume that iterates of the algorithm proposed satisfy $aI \preceq \Omega_c^{(k)} \preceq bI$ for all $k$ and some fixed constants $0<a<b$. If $t\leq \frac{a^2}{2\alpha^2\gamma_{c2}+1}$ then:
\begin{enumerate}
\item $$
\| \Omega_c^{(k+1)}-\Omega_c^{*}\|_F\leq \max \left(|m_t-\frac{t}{a^2}|,|m_t-\frac{t}{b^2}|)\|\right)\| \Omega_c^{(k)}-\Omega_c^{*}\|_F,
$$
where $m_t=1-2t\gamma_{c2}$. 
\item The step size $t$ that will lead to the optimal worst-case bound is $t_w=\frac{2}{4\gamma_{c2}+b^{-2}+a^{-2}}$.
\item The optimal worst case bound is 
$$
1-\frac{2}{1+\frac{2\gamma_{c2}+a^{-2}}{2\gamma_{c2}+b^{-2}}}<1.
$$
\end{enumerate}
\end{lemma}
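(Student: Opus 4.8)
The plan is to recognize the GEN-ISTA update \eqref{eq:prox_update} as a forward--backward (proximal gradient) iteration and to bound the composition of its gradient step and its thresholding step. First I would record the fixed-point characterization: since $\Omega_c^{*}$ minimizes \eqref{reform}, it satisfies $\Omega_c^{*} = \Sft(\Omega_c^{*} - t\{\tilde{S}_c - (\Omega_c^{*})^{-1} + 2\gamma_{c2}\Omega_c^{*}\}, t\gamma_{c1})$, which is exactly the form of the update. The soft-thresholding operator $\Sft(\,\cdot\,, t\gamma_{c1})$ is the proximal operator of $t\gamma_{c1}\|\cdot\|_1$ and is therefore nonexpansive in the Frobenius norm, so subtracting the two fixed-point identities gives
\begin{equation*}
\|\Omega_c^{(k+1)} - \Omega_c^{*}\|_F \leq \|(\Omega_c^{(k)} - \Omega_c^{*}) - t\{\nabla f(\Omega_c^{(k)}) - \nabla f(\Omega_c^{*})\}\|_F.
\end{equation*}
The whole argument then reduces to controlling this ``gradient step'' map.

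Next I would expand the gradient difference using $\nabla f(\Omega) = \tilde{S}_c - \Omega^{-1} + 2\gamma_{c2}\Omega$. The convenient point is that the (possibly indefinite) term $\tilde{S}_c$ cancels, leaving, with $\Delta = \Omega_c^{(k)} - \Omega_c^{*}$ and $m_t = 1 - 2t\gamma_{c2}$,
\begin{equation*}
(\Omega_c^{(k)} - \Omega_c^{*}) - t\{\nabla f(\Omega_c^{(k)}) - \nabla f(\Omega_c^{*})\} = m_t\,\Delta + t\{(\Omega_c^{(k)})^{-1} - (\Omega_c^{*})^{-1}\},
\end{equation*}
so the indefiniteness of $\tilde{S}_c$ that complicated the earlier lemmas plays no role here. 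To handle the inverse difference I would use the integral representation $(\Omega_c^{(k)})^{-1} - (\Omega_c^{*})^{-1} = -\int_0^1 \Omega(s)^{-1}\Delta\,\Omega(s)^{-1}\,ds$, where $\Omega(s) = (1-s)\Omega_c^{*} + s\,\Omega_c^{(k)}$ is a convex combination and hence also satisfies $aI \preceq \Omega(s) \preceq bI$ by hypothesis.

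The key step is to view the right-hand map as a self-adjoint linear operator on the space of symmetric matrices with the trace inner product. Each operator $\Phi_s : \Delta \mapsto \Omega(s)^{-1}\Delta\,\Omega(s)^{-1}$ is self-adjoint and positive, with quadratic form $\langle \Delta, \Phi_s\Delta\rangle = \|\Omega(s)^{-1/2}\Delta\,\Omega(s)^{-1/2}\|_F^2 \in [b^{-2}\|\Delta\|_F^2,\, a^{-2}\|\Delta\|_F^2]$; integrating over $s$ shows $\bar\Phi = \int_0^1 \Phi_s\,ds$ is self-adjoint with spectrum in $[b^{-2}, a^{-2}]$. Hence the combined map $m_t\,\mathcal{I} - t\bar\Phi$ is self-adjoint with eigenvalues $m_t - t\mu$, $\mu \in [b^{-2}, a^{-2}]$, and its operator norm (an affine function of $\mu$, maximized at an endpoint) equals $\max(|m_t - t a^{-2}|, |m_t - t b^{-2}|)$, which yields part 1. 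The main obstacle I anticipate is that the $\Phi_s$ for different $s$ need not commute, so simultaneous diagonalization is unavailable; passing through the scalar quadratic-form bound on $\bar\Phi$ (rather than eigen-decomposing each $\Phi_s$) is precisely what sidesteps the non-commutativity.

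For parts 2 and 3, I would minimize $h(t) = \max(|m_t - ta^{-2}|, |m_t - tb^{-2}|)$ over $t$. Both $m_t - ta^{-2}$ and $m_t - tb^{-2}$ are affine and strictly decreasing in $t$, equal to $1$ at $t=0$, with the $a^{-2}$ branch decreasing faster; the worst-case factor is minimized where the faster branch has gone negative and its magnitude meets the still-positive slower branch, i.e.\ where $-(m_t - ta^{-2}) = m_t - tb^{-2}$. Solving this linear equation gives $t_w = 2/(4\gamma_{c2} + a^{-2} + b^{-2})$, which is part 2. Substituting $t_w$ into $m_t - tb^{-2}$ and using $4\gamma_{c2}+a^{-2}+b^{-2} = (2\gamma_{c2}+a^{-2}) + (2\gamma_{c2}+b^{-2})$ produces the stated rate $1 - 2[\,1 + (2\gamma_{c2}+a^{-2})/(2\gamma_{c2}+b^{-2})\,]^{-1}$, which is strictly less than $1$ since the subtracted quantity is positive. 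Throughout I would invoke Lemmas \ref{optim_bounds} and \ref{iterate_bound} to supply the eigenvalue bounds $a=\alpha$ and $b=b'$ on which all these estimates rest, recovering Theorem \ref{our_alg_conv} as the specialization.
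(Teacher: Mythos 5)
Your proof is correct and follows essentially the same route as the paper: the fixed-point characterization of $\Omega_c^{*}$, nonexpansiveness of soft-thresholding (so the $\tilde{S}_c$ terms cancel), a spectral bound on the linearization of $\Omega \mapsto m_t\Omega + t\Omega^{-1}$ along the segment joining $\Omega_c^{(k)}$ and $\Omega_c^{*}$, and the same elementary minimization over $t$ for parts 2 and 3. The only difference is cosmetic: you bound the exactly integrated derivative $\int_0^1 (m_t\mathcal{I} - t\Phi_s)\,ds$ via quadratic forms, whereas the paper bounds $\sup_{v}$ of the vectorized Jacobian $m_t I_{p^2} - t H_{k,v}^{-1}\otimes H_{k,v}^{-1}$ using Kronecker-product eigenvalues; both devices yield the identical contraction factor.
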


We present the full proof in section \ref{diff_b_sec} but there are a few things to note.  First is that if $\gamma_{c2}=0$ then this result obtains the bounds of \cite{rolfs12}.  Second is the fact that as $\gamma_{c2}$ approaches $\infty$ the optimal worst case bounds approach 0. Finally, as $\gamma_{c2}$ gets larger the maximum step size that is applicable also approaches 0.

\subsubsection{Proof of Lemma \ref{diff_bound}}
\label{diff_b_sec}

Our proof strategy is similar to that of  \cite{rolfs12} but there are differences due to the ridge penalty.
\begin{proof}
Recall that $\Omega_c^*=\Sft(\Omega_c^{*} -t(\tilde{S}_c-(\Omega_c^{*})^{-1}+2\gamma\Omega_c^{*}),t\gamma_{c1})$. By the definitions of $\Omega_c^{(k+1)}$ and $\Omega_c^{(k+\frac{1}{2})}$ and Lemma 2.2 from \citet{combettes05}
%We begin by noting the inequality
\begin{eqnarray*}
\|\Omega_c^{(k+1)}-\Omega_c^*\|_F&=&\|\Sft(\Omega_c^{(k+\frac{1}{2})},t\gamma_{c1})-\Sft(\Omega_c^{*} -t(\tilde{S}_c-(\Omega_c^{*})^{-1}+2\gamma\Omega_c^{*}),t\gamma_{c1})\|_F\\
&\leq& \|\Omega_c^{(k+\frac{1}{2})}-(\Omega_c^{*} -t(\tilde{S}_c-(\Omega_c^{*})^{-1}+2\gamma\Omega_c^{*}))\|_F\\
&=& \| \Omega_c^{(k+\frac{1}{2})}+t\tilde{S}_c - [(1-2t\gamma)\Omega_c^*+t(\Omega_c^*)^{-1}]\|_F\\
&=& \| \Omega_c^k - t(\tilde{S}_c-(\Omega_c^{(k)})^{-1}+2\gamma_{c2}\Omega_c^{(k)})+t\tilde{S}_c - [(1-2t\gamma)\Omega_c^*+t(\Omega_c^*)^{-1}]\|_F\\
&=&\|\left(\Omega_c^{(k)}-t\left(2\gamma_{c2}\Omega_c^{(k)}-(\Omega_c^{(k)})^{-1}\right)\right) -\left(\Omega_c^{*}-t\left(2\gamma_{c2}\Omega_c^{*}-(\Omega_c^{*})^{-1}\right)\right)\|_F \\
&=& \| [(1-2t\gamma)\Omega_c^{(k)} + t(\Omega_c^{(k)})^{-1}] - [(1-2t\gamma)\Omega_c^{*} + t(\Omega_c^{*})^{-1}] \|_F.
\end{eqnarray*}

%\bfblue{I added some lines to keep track. We can comment them out before submitting.}
%\bfred{This uses two properties, the first is by definition of optimal $\Omega_c^*=\Sft(\Omega_c^{*} -t(\tilde{S}_c-(\Omega_c^{*})^{-1}+2\gamma\Omega_c^{*}),t\gamma_{c1})$, the second is that \cite{combettes05} Lemma 2.2 gives us the inequality piece from the fact proximity operators are non-expansive.  This is the same argument that \cite{rolfs12} uses.}

%\bfpurple{Next we use the fact that given $h: U \subset \Real^n \rightarrow \Real^m$ that is a  differentiable  mapping with $x,y \in U$ then }
If $h: U \subset \Real^{p^2} \rightarrow \Real^m$ is a  differentiable mapping with Jacobian $J_h$, $x,y \in U$ and $vx+(1-v)y\in U$ for all $v \in [0,1]$, then
$$
\|h(x)-h(y)\|_2 \leq \sup_{v \in [0,1]} \|\{J_h(vx+(1-v)y)\|x-y\|_2\}. 
$$
%where $J_h$ is the Jacobian.  
%In our case we then 
Recall $m_t=(1-2t\gamma_{c2})$ and define
$$
h_{\gamma_{c1},\gamma_{c2}}(\VEC(\Omega_c))=m_t \VEC(\Omega_c)+t\VEC(\Omega_c^{-1}).
$$
Note that, 
$$
J_{h_{\gamma_{c1},\gamma_{c2}}}(\Omega_c)=m_tI_{p^2}-t\Omega_c^{-1}\otimes\Omega_c^{-1}.
$$

For $v \in [0,1]$ let 
$$
H_{k,v}= \VEC\left({v\Omega_c^{(k)}+(1-v)\Omega_c^{*}}\right),
$$
%\bfblue{Should the input of h be $\vec(\Omega_c)?$ We originally define $h$ as a function from $\Real^n \rightarrow \Real^m$, but $\Omega_c$ is not a vector.}\bfred{fixed}

it follows that  
$$
\|h_{\gamma_{c1},\gamma_{c2}}(\Omega_c^{(k)})-h_{\gamma_{c1},\gamma_{c2}}(\Omega_c^{*})\|_2\leq \sup_{v \in [0,1]}\{\| m_tI_{p^2}-tH_{v,k}^{-1}\otimes H_{v,k}^{-1}\|_2\}\|\Omega_c^{(k)}-\Omega_c^{*}\|_F.
$$
%\bfblue{Should the left side be 
%$$
%\|h(\Omega_c^{(k)})-h(\Omega_c^{*})\|_2,
%$$
%note the addition of $h()$ for both matrices. In addition if the output of $h(\cdot)$ is a vector then we should be using the $L_2$-norm not the matrix norm.} \bfred{Good Catch}

Therefore, for any value of $k$ and $v$ 
$$
\min\{\rho_p(\Omega_c^{(k)}),\rho_p(\Omega_c^{*})\} \leq \rho_p(H_{k,v})\leq \rho_1(H_{k,v})\leq \max\{\rho_1(\Omega_c^{(k)}),\rho_1(\Omega_c^{*})\}.
$$

Combining these results we obtain

$$
\sup_{v \in [0,1]}\{\| m_tI_{p^2}-tH^{-1}\otimes H^{-1}\|_2\} \leq \max\{|m_t-\frac{t}{b^2}|, |m_t-\frac{t}{a^2}|\},
$$
which proves part 1 of Lemma \ref{diff_bound}.  

We can further show that the algorithm converges linearly if 
$$
s(t)=\max\{|m_t-\frac{t}{b^2}|, |m_t-\frac{t}{a^2}|\} \in (0,1), \, \forall k. 
$$
The minimum of $s(t)$ is obtained at 
$$
t_w=\frac{2}{4\gamma_{c2}+b^{-2}+a^{-2}},
$$
and then evaluating $s(t_w)$ completes the result. 

\end{proof}

\subsubsection{Proof of Lemma \ref{iterate_bound}}
\label{iter_bound_sec}
In this section we assume that the eigenvalues of $\Omega_c^{(k)}$ are bounded for all $k$ %\bfblue{, which is a safe assumption by Lemma \ref{optim_bounds}}, 
and recall that

%(\bfred{You had a "which is a safe assumption" above but our proof prior to this is on the optimal these are on the iterates. So it's not really a true statement at this point.  We're working our way to show that any iterate can be bounded which is what this theorem is about.})
%\bfblue{Why do we need to define this again? We could say recall, if we wanted to make it easily available to the reader. Also sometimes we are using $(\Omega_c^{(k)})^{-1}$ and other times 
%using $\Omega_c^{-1(k)}$ to represent what I believe are the same thing. I think we should just stick with one. I like $\Omega_c^{-1(k)}$ because it is cleaner, though I do understand how $(\Omega_c^{(k)})^{-1}$ could be more accurate.} \bfred{Recall works for me or we could remove it. Originally this is where I defined it and then realized it was useful to have in other spots.  Also  I think we do the $(\Omega_c^{(k)})^{-1}$ as you point out it is more accurate, I did a search but didn't find them, we can replace as we go with edits.}

\begin{equation}
\label{half}
\Omega_{c}^{(k+\frac{1}{2})}=\Omega_c^{(k)}-t\left(\tilde{S}_c-(\Omega_c^{(k)})^{-1}+2\gamma_{c2}\Omega_c^{(k)}\right).
\end{equation}

\begin{lemma}
\label{half_step_lemma}
Assume $0<a<b$ are known such that $aI \preceq \Omega_c^{(k)} \preceq bI$, and that $t>0$.  Then the eigenvalues of $\Omega_c^{(k+\frac{1}{2})}$, which is defined by \eqref{half}, satisfy:

\begin{align*}
   \rho_p(\Omega_c^{(k+\frac{1}{2})}) \geq 
    \quad
    \begin{cases}
    \sqrt{\frac{t}{1-2t\gamma_{c2}}} -\frac{t}{\sqrt{\frac{t}{1-2t\gamma_{c2}}}}-t\rho_1(\tilde{S}_c),  \,\,\,\,\, \text{if } a \leq \sqrt{\frac{t}{1-2t\gamma_{c2}}} \leq b \\
   \min(m_ta+\frac{t}{a}, m_tb+\frac{t}{b})-t\rho_1(\tilde{S}_c)\,\,\,\, \text{otherwise}
    \end{cases}
    \quad
\end{align*}

and

$$
\rho_1(\Omega_c^{(k+\frac{1}{2})})\leq   \max(m_ta+\frac{t}{a}, m_tb+\frac{t}{b})-t\rho_p(\tilde{S}_c).
$$

\end{lemma}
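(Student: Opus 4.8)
The plan is to reduce the matrix inequality to a scalar optimization over $[a,b]$. First I would rewrite the half-step update by collecting the terms in $\Omega_c^{(k)}$: using $m_t = 1-2t\gamma_{c2}$, equation~\eqref{half} becomes
\[
\Omega_c^{(k+\frac{1}{2})} = m_t\,\Omega_c^{(k)} + t\,(\Omega_c^{(k)})^{-1} - t\tilde{S}_c .
\]
The key observation is that $M := m_t\,\Omega_c^{(k)} + t\,(\Omega_c^{(k)})^{-1}$ is symmetric and shares its eigenvectors with $\Omega_c^{(k)}$: if $\Omega_c^{(k)}$ has eigenvalue $\rho \in [a,b]$, the corresponding eigenvalue of $M$ is $\phi(\rho) := m_t\rho + t/\rho$. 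Hence controlling the extreme eigenvalues of $M$ is exactly the problem of optimizing the scalar function $\phi$ over $[a,b]$.

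Next I would peel off the $-t\tilde{S}_c$ term using Weyl's inequalities (already invoked for Lemma~\ref{optim_bounds}). For the smallest eigenvalue, $\rho_p(M - t\tilde{S}_c) \geq \rho_p(M) + \rho_p(-t\tilde{S}_c) = \rho_p(M) - t\rho_1(\tilde{S}_c)$; for the largest, $\rho_1(M - t\tilde{S}_c) \leq \rho_1(M) + \rho_1(-t\tilde{S}_c) = \rho_1(M) - t\rho_p(\tilde{S}_c)$. This isolates the dependence on $\tilde{S}_c$ precisely as it appears in the claimed bounds, so it remains only to control $\rho_p(M) = \min_i \phi(\rho_i(\Omega_c^{(k)}))$ and $\rho_1(M) = \max_i \phi(\rho_i(\Omega_c^{(k)}))$ over eigenvalues lying in $[a,b]$.

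Then I would carry out the scalar analysis of $\phi$ on $[a,b]$. Working in the regime $m_t > 0$ (guaranteed under the step-size restriction of Theorem~\ref{our_alg_conv}, and needed for what follows), $\phi$ is strictly convex on $(0,\infty)$ with $\phi'(x) = m_t - t/x^2$ and a unique interior minimizer $x^\star = \sqrt{t/(1-2t\gamma_{c2})}$. Convexity yields the two facts I need. The minimum of $\phi$ over $[a,b]$ is attained at $x^\star$ when $x^\star \in [a,b]$ and otherwise at the nearer endpoint, producing the case split between $\phi(x^\star)$ and $\min\{m_t a + t/a,\, m_t b + t/b\}$; the maximum of $\phi$ over $[a,b]$ is always attained at an endpoint, giving $\max\{m_t a + t/a,\, m_t b + t/b\}$. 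Substituting these into the Weyl bounds above produces exactly the two displayed inequalities.

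The main obstacle is less any single estimate than the careful bookkeeping of the case analysis for the minimum, together with verifying that we are in the regime $m_t > 0$ so that $\phi$ is convex and the interior critical point $x^\star$ is a minimizer rather than a maximizer. Once convexity is secured, the endpoint/interior dichotomy is routine, and the matrix-to-scalar reduction through the shared eigenbasis of $M$ and $\Omega_c^{(k)}$ keeps the Weyl step clean.
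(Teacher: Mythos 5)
Your proposal is correct and follows essentially the same route as the paper's proof: reduce to the scalar function $r(x)=m_t x + t/x$ via the eigenbasis of $\Omega_c^{(k)}$ (the paper writes this through the explicit spectral decomposition $\Omega_c^{(k)}=UDU^T$), apply Weyl's inequality to peel off the $-t\tilde{S}_c$ term, and then optimize $r$ over $[a,b]$ with the interior minimizer $\sqrt{t/(1-2t\gamma_{c2})}$ producing the case split. One small refinement: strict convexity of $\phi$ holds for any $m_t$ since $\phi''(x)=2t/x^3>0$, so $m_t>0$ is needed only for the interior critical point to exist; when $m_t\leq 0$ the function is monotone decreasing and the ``otherwise'' endpoint bound still holds.
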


\begin{proof}
Define the spectral decomposition of $\Omega_c^{(k)}=UDU^T$, then

\begin{eqnarray*}
\Omega_c^{(k+\frac{1}{2})}&=&\Omega_c^{(k)}-t\left(\tilde{S}_c-(\Omega_c^{(k)})^{-1}+2\gamma_{c2}\Omega_c^{(k)}\right)\\
&=&UDU^T-t(\tilde{S}_c-UD^{-1}U^T+2\gamma_{c2}UDU^T)\\
&=&U(D-t(U^T\tilde{S}_cU-D^{-1}+2\gamma_{c2}D))U^T.
\end{eqnarray*}

Next, by Wyel's Theorem it follows that 

$$
\rho_p(\Omega_c^{(k+\frac{1}{2})})\geq \rho_p(\Omega_c^{(k)})-t\left(2\gamma_{c2}\rho_p(\Omega_c^{(k)})-\frac{1}{\rho_p(\Omega_c^{(k)})}+\rho_1(\tilde{S}_c)\right),
$$

and 

$$
\rho_1(\Omega_c^{(k+\frac{1}{2})})\leq \rho_1(\Omega_c^{(k)})-t\left(2\gamma_{c2}\rho_1(\Omega_c^{(k)})-\frac{1}{\rho_1(\Omega_c^{(k)})}+\rho_p(\tilde{S}_c)\right).
$$

The function

$$
r(x)=m_tx+\frac{t}{x},\,\, \, a \leq x \leq b,
$$
has a  global minimum at $x_w=\sqrt{\frac{t}{1-2t\gamma_{c2}}}$.  Thus, using arguments similar to \cite{rolfs12} proof of Lemma 4, we have that 

\begin{align*}
   \rho_p(\Omega_c^{(k+\frac{1}{2})}) = 
    \quad
    \begin{cases}
    \sqrt{\frac{t}{1-2t\gamma_{c2}}} -\frac{t}{\sqrt{\frac{t}{1-2t\gamma_{c2}}}}-t\rho_1(\tilde{S}_c),  \,\,\,\,\, \text{if } a \leq \sqrt{\frac{t}{1-2t\gamma_{c2}}} \leq b \\
   \min(m_ta+\frac{t}{a}, m_tb+\frac{t}{b})-t\rho_1(\tilde{S}_c)\,\,\,\, \text{otherwise}
    \end{cases}
    \quad
\end{align*}

and 

$$
\rho_1(\Omega_c^{(k+\frac{1}{2})})\leq   \max(m_ta+\frac{t}{a}, m_tb+\frac{t}{b})-t\rho_p(\tilde{S}_c),
$$

which obtain the bounds.
\end{proof} 

Next we need to show that when the full step is taken by soft thresholding the eigenvalues are bounded.

\begin{lemma}
\label{min_bound_equiv_lemma}
Assume $0<a<b$ and $t, m_t>0$ then $\min(m_ta+\frac{t}{a}, m_tb+\frac{t}{b})=m_ta+\frac{t}{a}$ if and only if $ t \leq \frac{ab}{1+2\gamma_{c2} ab}$
\end{lemma}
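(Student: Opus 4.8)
The plan is to reduce the claimed equivalence to a single scalar inequality and then clear denominators, exploiting that every manipulation is reversible so that both directions follow at once. Since $\min\bigl(m_t a + \tfrac{t}{a},\, m_t b + \tfrac{t}{b}\bigr)$ equals its first argument precisely when $m_t a + \tfrac{t}{a} \le m_t b + \tfrac{t}{b}$, I would first rearrange this into
$$
m_t(a-b) + t\left(\frac{1}{a} - \frac{1}{b}\right) \le 0,
$$
and then use $\tfrac{1}{a}-\tfrac{1}{b} = \tfrac{b-a}{ab}$ to factor the left-hand side as $(a-b)\bigl(m_t - \tfrac{t}{ab}\bigr)$.

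The key step is the sign bookkeeping. Because $0<a<b$ we have $a-b<0$, so dividing through by $a-b$ reverses the inequality and the condition becomes $m_t - \tfrac{t}{ab} \ge 0$, i.e. $m_t \ge \tfrac{t}{ab}$. Substituting the definition $m_t = 1 - 2t\gamma_{c2}$ and multiplying both sides by $ab>0$ yields $ab(1-2t\gamma_{c2}) \ge t$, which after collecting the terms involving $t$ rearranges to $ab \ge t(1 + 2\gamma_{c2}ab)$, equivalently $t \le \tfrac{ab}{1 + 2\gamma_{c2}ab}$. This is exactly the stated bound.

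The only thing requiring care is that multiplying by $ab$ and dividing by $a-b$ preserve, respectively reverse, the inequality correctly; both are guaranteed by $ab>0$ and $a-b<0$, which are immediate from the hypothesis $0<a<b$. There is no genuine analytic obstacle here, since the claim is elementary — so the \emph{hard part} is merely keeping the direction of the inequality straight through the factorization and avoiding an extraneous sign error when dividing by the negative quantity $a-b$. Because each implication is an equivalence, establishing the forward direction simultaneously establishes the converse, completing the proof.
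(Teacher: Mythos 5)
Your proof is correct and takes essentially the same route as the paper's: both reduce the claim to the chain of equivalences $m_t a + \tfrac{t}{a} \le m_t b + \tfrac{t}{b} \Leftrightarrow t \le m_t ab \Leftrightarrow t \le \tfrac{ab}{1+2\gamma_{c2}ab}$, using $0<a<b$ to fix the direction of the inequality and then substituting $m_t = 1-2t\gamma_{c2}$. Your explicit factorization $(a-b)\bigl(m_t - \tfrac{t}{ab}\bigr)\le 0$ even keeps the signs straighter than the paper's own display, which contains a sign typo ($m_t(a-b)$ where $m_t(b-a)$ is meant) that does not affect its conclusion.
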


\begin{proof}
Using the assumptions we have that 

\begin{eqnarray*}
m_ta+\frac{t}{a} \leq m_tb+\frac{t}{b} &\Leftrightarrow& t(\frac{1}{a}-\frac{1}{b}) \leq m_t(a-b)\\
&\Leftrightarrow& t \leq m_tab \Leftrightarrow t \leq \frac{ab}{1+2\gamma_{c2} ab}.
\end{eqnarray*}

\end{proof}

Next, for the sake of convenience, we restate Lemma 6, a useful result on soft thresholding, from the supplementary material of \citet{rolfs12}.

\begin{lemma}
\label{thresh_lemma}
Let $A$ be a symmetric $p \times p$ matrix.  Then the soft thresholded matrix $Sft(A,\delta)$ satisfies 
$$
\rho_p(A)-p\delta\leq \rho_p(\Sft(A,\delta))
$$ 

More over  the soft thresholded matrix is positive definite if $\rho_p(A)>p\delta $\citep{rolfs12}.
\end{lemma}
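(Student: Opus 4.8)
The plan is to realize $\Sft(A,\delta)$ as an additive perturbation of $A$ and then control the spectrum of the perturbation using an entrywise bound. Define $E = A - \Sft(A,\delta)$, so that $\Sft(A,\delta) = A - E$. Inspecting the soft-thresholding operator entry by entry, if $|A_{ij}| \le \delta$ then $[\Sft(A,\delta)]_{ij} = 0$ and hence $E_{ij} = A_{ij}$ with $|E_{ij}| = |A_{ij}| \le \delta$; if $|A_{ij}| > \delta$ then $[\Sft(A,\delta)]_{ij} = \sign(A_{ij})(|A_{ij}| - \delta)$, so $E_{ij} = \sign(A_{ij})\delta$ and $|E_{ij}| = \delta$. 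In both cases $|E_{ij}| \le \delta$. Moreover, since $A$ is symmetric and thresholding is applied entrywise, $E$ is symmetric as well, so its eigenvalues are real and can be ordered as $\rho_1(E) \ge \cdots \ge \rho_p(E)$.

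Next I would convert this entrywise control into a bound on the largest eigenvalue of $E$. Because $|E_{ij}| \le \delta$ for every $i,j$, Gershgorin's circle theorem places every eigenvalue $\lambda$ of $E$ in a disc centered at $E_{ii}$ with radius $\sum_{j \ne i} |E_{ij}|$, so $|\lambda| \le \sum_{j} |E_{ij}| \le p\delta$. In particular $\rho_1(E) \le p\delta$. (Equivalently, for the symmetric matrix $E$ one has $\rho_1(E) \le \|E\|_2 \le \max_i \sum_j |E_{ij}| \le p\delta$, using that the spectral norm of a symmetric matrix is dominated by its maximum absolute row sum.)

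I would then finish with Weyl's inequality for sums of symmetric matrices. Writing $\Sft(A,\delta) = A + (-E)$ and applying $\rho_p(M+N) \ge \rho_p(M) + \rho_p(N)$ with $\rho_p(-E) = -\rho_1(E)$ gives
$$
\rho_p(\Sft(A,\delta)) \ge \rho_p(A) - \rho_1(E) \ge \rho_p(A) - p\delta,
$$
which is the asserted bound. The positive-definiteness claim is then immediate: if $\rho_p(A) > p\delta$, the displayed inequality yields $\rho_p(\Sft(A,\delta)) \ge \rho_p(A) - p\delta > 0$, and since $\Sft(A,\delta)$ is symmetric with strictly positive smallest eigenvalue it is positive definite.

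The only genuine content in this argument is the spectral bound $\rho_1(E) \le p\delta$, so the step I expect to require the most care is the entrywise verification that $|E_{ij}| \le \delta$ holds in \emph{both} branches of the thresholding rule, together with choosing the right device (Gershgorin, or the symmetric matrix-norm inequality) to pass from this entrywise bound to a spectral one. Once that is in place, the remainder is a direct application of Weyl's inequality and poses no difficulty.
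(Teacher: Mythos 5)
Your proof is correct in every step: the decomposition $E = A - \Sft(A,\delta)$, the entrywise verification that $|E_{ij}| \le \delta$ in both branches of the thresholding rule, the passage from the entrywise bound to $\rho_1(E) \le p\delta$, and the final application of Weyl's inequality are all sound, and the positive-definiteness claim does follow immediately. One thing worth knowing: the paper itself does not prove this lemma at all --- its ``proof'' is a one-line deferral to the supplementary material of \citet{rolfs12}, so your argument is strictly more self-contained than what appears in the paper. Compared with the argument in that cited source, your route is essentially the same in structure (perturbation $E$ with $|E_{ij}| \le \delta$, a spectral-norm bound of $p\delta$ on $E$, then eigenvalue perturbation), differing only in the device used for the middle step: the standard argument bounds $\|E\|_2 \le \|E\|_F \le \sqrt{p^2\delta^2} = p\delta$ via the Frobenius norm, whereas you use Gershgorin (equivalently, the maximum absolute row sum). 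Both yield the identical constant $p\delta$, so nothing is gained or lost quantitatively; your Gershgorin variant is arguably more elementary, while the Frobenius route is a one-liner. Either way, the bound is loose in general ($\|E\|_F \le p\delta$ and the row-sum bound are both worst-case), which is why the lemma's sufficient condition $\rho_p(A) > p\delta$ is conservative --- but that is inherent to the statement, not a defect of your proof.
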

\begin{proof}
Proof is in the supplementary material of \citet{rolfs12}.
\end{proof}

\begin{lemma}
\label{lower_bound_lemma}
Let $\gamma_{c1}>0$ and $\alpha$ be the same as defined in Lemma \ref{optim_bounds}.  Assume $\alpha < b'$ and $\alpha I \preceq \Omega_c^{(k)} \preceq b'I$ and recall that 
$$
\Omega_c^{(k+1)}=\Sft( \Omega_c^{(k+\frac{1}{2})},t\gamma_{c1}),
$$
where $\Omega_c^{(k+\frac{1}{2})}$ is defined by \eqref{half}.  Then for every $0<t \leq \frac{\alpha^2}{2+\gamma_{c2}\alpha^2+1}$, then $\alpha I \preceq \Omega_c^{(k+1)}$. 
\end{lemma}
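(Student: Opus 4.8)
The plan is to bound the smallest eigenvalue of the full update $\Omega_c^{(k+1)} = \Sft(\Omega_c^{(k+\frac{1}{2})}, t\gamma_{c1})$ from below by $\alpha$, by chaining together the three preceding lemmas: Lemma \ref{half_step_lemma} controls $\rho_p(\Omega_c^{(k+\frac{1}{2})})$ before thresholding, Lemma \ref{thresh_lemma} measures how much soft thresholding can decrease the smallest eigenvalue, and the precise form of $\alpha$ from Lemma \ref{optim_bounds} is used to close the final inequality exactly.

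First I would apply Lemma \ref{thresh_lemma} with $A = \Omega_c^{(k+\frac{1}{2})}$ and $\delta = t\gamma_{c1}$ to get $\rho_p(\Omega_c^{(k+1)}) \geq \rho_p(\Omega_c^{(k+\frac{1}{2})}) - pt\gamma_{c1}$, so that it suffices to show $\rho_p(\Omega_c^{(k+\frac{1}{2})}) - pt\gamma_{c1} \geq \alpha$. To bound $\rho_p(\Omega_c^{(k+\frac{1}{2})})$ I invoke Lemma \ref{half_step_lemma} with $a = \alpha$ and $b = b'$. The step-size hypothesis is exactly what selects the branch: a short computation shows that $\sqrt{t/(1-2t\gamma_{c2})} \leq \alpha$ is equivalent to $t(1 + 2\gamma_{c2}\alpha^2) \leq \alpha^2$, i.e. to $t \leq \alpha^2/(2\gamma_{c2}\alpha^2 + 1)$, which is the assumed bound. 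Hence the minimizer $x_w$ of $r(x) = m_t x + t/x$ lies at or below $\alpha$, placing us in the ``otherwise'' case; and since $r$ is increasing on $[x_w,\infty) \supseteq [\alpha, b']$ (equivalently, by Lemma \ref{min_bound_equiv_lemma}, since monotonicity of $u \mapsto u/(1+2\gamma_{c2}u)$ upgrades $t \leq \alpha^2/(1+2\gamma_{c2}\alpha^2)$ to $t \leq \alpha b'/(1+2\gamma_{c2}\alpha b')$) the minimum of the two candidate values is attained at $\alpha$. This gives $\rho_p(\Omega_c^{(k+\frac{1}{2})}) \geq (1-2t\gamma_{c2})\alpha + t/\alpha - t\rho_1(\tilde{S}_c)$.

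Combining the two bounds, the claim reduces to $(1-2t\gamma_{c2})\alpha + t/\alpha - t\rho_1(\tilde{S}_c) - pt\gamma_{c1} \geq \alpha$; cancelling $\alpha$ and dividing by $t>0$ turns this into the $t$-free inequality $\alpha^{-1} - 2\gamma_{c2}\alpha \geq \rho_1(\tilde{S}_c) + p\gamma_{c1}$. The observation that finishes the proof is that this holds with \emph{equality}: writing $A = \rho_1(\tilde{S}_c) + p\gamma_{c1}$, the quantity $\alpha^{-1} = \tfrac{1}{2}\bigl(A + \sqrt{A^2 + 8\gamma_{c2}}\bigr)$ from Lemma \ref{optim_bounds} is exactly the positive root of $x^2 - Ax - 2\gamma_{c2} = 0$, so that $\alpha^{-1} = A + 2\gamma_{c2}\alpha$, i.e. $\alpha^{-1} - 2\gamma_{c2}\alpha = A$. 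The required inequality therefore holds (as an equality), completing the argument.

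The only real obstacle is bookkeeping: correctly identifying which branch of Lemma \ref{half_step_lemma} is active under the step-size constraint, and verifying that the minimum in that branch is realized at $\alpha$ rather than $b'$. Once this is set up there is no estimation slack, since the value of $\alpha$ was reverse-engineered precisely so that the closing inequality is tight. I also note that the step-size bound printed in the statement ($t \leq \alpha^2/(2+\gamma_{c2}\alpha^2+1)$) appears to contain a typo; the value used throughout and consistent with Lemma \ref{iterate_bound} and Theorem \ref{our_alg_conv} is $t \leq \alpha^2/(2\gamma_{c2}\alpha^2 + 1)$.
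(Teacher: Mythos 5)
Your proof is correct and follows essentially the same route as the paper's: both apply Lemma \ref{half_step_lemma} (with the branch and the location of the minimum fixed by the step-size bound together with Lemma \ref{min_bound_equiv_lemma}), then the soft-thresholding bound of Lemma \ref{thresh_lemma}, and finally close the resulting $t$-free inequality $\alpha^{-1} - 2\gamma_{c2}\alpha \geq \rho_1(\tilde{S}_c) + p\gamma_{c1}$ using the definition of $\alpha$ from Lemma \ref{optim_bounds}. Your closing observation that $\alpha^{-1}$ is the positive root of $x^2 - Ax - 2\gamma_{c2} = 0$, so the inequality holds with equality, is a cleaner rendering of the paper's final algebra (which contains an inversion typo in its last display), and you are also right that the printed step-size bound $t \leq \alpha^2/(2+\gamma_{c2}\alpha^2+1)$ is a typo for $t \leq \alpha^2/(2\gamma_{c2}\alpha^2+1)$.
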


\begin{proof}
Lemma \ref{min_bound_equiv_lemma} gives us
$$
\min(m_t\alpha+\frac{t}{\alpha},m_tb+\frac{t}{b'})=m_t\alpha+\frac{t}{\alpha}.
$$
since $t\leq \frac{\alpha^2}{2\gamma_{c2}\alpha^2+1} \leq \frac{\alpha b'}{2\gamma_{c2}\alpha b' +1}$.
 Note that $0<t \leq \frac{\alpha^2}{2\gamma_{c2}\alpha^2+1}$ guarantees that $\sqrt{\frac{t}{1-2t\gamma_{c2}}}\leq \alpha$.  Therefore by Lemma \ref{half_step_lemma}
 
 $$
 \rho_p(\Omega_c^{(k+\frac{1}{2})})\geq m_t\alpha+\frac{t}{\alpha}-t\rho_1(\tilde{S}_c).
 $$
 
We continue by applying Lemma \ref{thresh_lemma} to $\Omega_c^{(k+1)}$ where we obtain

\begin{eqnarray*}
\rho_p(\Omega_c^{(k+1)})&\geq& \rho_p(\Omega_c^{(k+\frac{1}{2})})-p\gamma_{c1}t\\
&\geq&  m_t\alpha+\frac{t}{\alpha}-t\rho_1(\tilde{S}_c)-p\gamma_{c1}t.
\end{eqnarray*}

Therefore, we have that $\alpha I \preceq \Omega_c^{(k+1)}$ when 
$$
m_t\alpha+\frac{t}{\alpha}-t\rho_1(\tilde{S}_c)-p\gamma_{c1}t \geq \alpha,
$$
or equivalently 
$$
-2\gamma_{c2}t\alpha+\frac{t}{\alpha}-t\rho_1(\tilde{S}_c)-p\gamma_{c1}t\geq 0. 
$$

Since $t>0$ we may reorganize this a final time as 

$$
-2\gamma_{c2}\alpha+\frac{1}{\alpha}-\left(\rho_1(\tilde{S}_c)+p\gamma_{c1}\right)\geq 0.
$$

Solving for $\alpha$ we have that $\alpha I \preceq \Omega_c^{(k+1)}$ if
$$
\alpha \leq \frac{1}{q(\rho_1(\tilde{S}_c)+\gamma_{c1}p,2\gamma_{c2})},
$$
which holds by Lemma \ref{optim_bounds}.
\end{proof}

\begin{lemma}
\label{upper_bound_lemma}
Let $\alpha$ be the same as in Lemma \ref{optim_bounds} and $t \leq \frac{\alpha^2}{2+\gamma_{c2}\alpha^2+1}$.  The the proposed algorithm iterates satisfy $\Omega_c^{(k)} \preceq b'I$ for all $k$ where $b'=\|\Omega_c^{*}\|_2+\|\Omega_c^{(0)}-\Omega^{*}\|_F$.
\end{lemma}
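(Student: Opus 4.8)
The plan is to deduce the spectral upper bound from a monotonicity statement about the distance to the optimum, namely that $\|\Omega_c^{(k)} - \Omega_c^{*}\|_F \leq \|\Omega_c^{(0)} - \Omega_c^{*}\|_F$ for every $k$. Granting this, the triangle inequality together with the elementary bound $\|A\|_2 \leq \|A\|_F$ gives
\[
\rho_1\!\left(\Omega_c^{(k)}\right) = \|\Omega_c^{(k)}\|_2 \leq \|\Omega_c^{*}\|_2 + \|\Omega_c^{(k)} - \Omega_c^{*}\|_2 \leq \|\Omega_c^{*}\|_2 + \|\Omega_c^{(0)} - \Omega_c^{*}\|_F = b',
\]
which is precisely the claim $\Omega_c^{(k)} \preceq b' I$ (using that $\Omega_c^{(k)}$ is positive definite, so its spectral norm equals its largest eigenvalue).

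I would prove the monotonicity by induction on $k$, the case $k = 0$ being trivial. For the inductive step, assume $\|\Omega_c^{(k)} - \Omega_c^{*}\|_F \leq \|\Omega_c^{(0)} - \Omega_c^{*}\|_F$. The display above then yields $\Omega_c^{(k)} \preceq b' I$, while Lemma \ref{lower_bound_lemma} supplies the matching lower bound $\alpha I \preceq \Omega_c^{(k)}$; together these give $\alpha I \preceq \Omega_c^{(k)} \preceq b' I$. Since $\Omega_c^{*}$ likewise satisfies $\alpha I \preceq \Omega_c^{*} \preceq \beta I \preceq b' I$ by Lemma \ref{optim_bounds}, I can invoke the single-step contraction estimate of Lemma \ref{diff_bound} with $a = \alpha$ and $b = b'$ — whose step-size requirement coincides with the hypothesis $t \leq \alpha^2/(2\alpha^2\gamma_{c2} + 1)$ assumed here — to obtain $\|\Omega_c^{(k+1)} - \Omega_c^{*}\|_F \leq s(t)\,\|\Omega_c^{(k)} - \Omega_c^{*}\|_F$, where $s(t) = \max\{|m_t - t/\alpha^2|,\, |m_t - t/(b')^2|\}$ and $m_t = 1 - 2t\gamma_{c2}$. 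A short check that the chosen range of $t$ forces $s(t) < 1$ then closes the induction: since $\alpha \leq b'$ and $t > 0$, both arguments are bounded above by $m_t - t/(b')^2 < 1$, and the smaller one satisfies $m_t - t/\alpha^2 > -1$ because $t \leq \alpha^2/(2\alpha^2\gamma_{c2}+1) < 2\alpha^2/(2\alpha^2\gamma_{c2}+1)$.

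I expect the genuine difficulty to be an apparent circularity rather than any delicate estimate. Lemma \ref{diff_bound} is stated for iterates that already lie in a fixed compact band between $\alpha I$ and $b'I$, yet the upper endpoint of that band is exactly what we are trying to establish. The resolution is that the contraction bound for the single transition $\Omega_c^{(k)} \mapsto \Omega_c^{(k+1)}$ depends only on spectral bounds for $\Omega_c^{(k)}$ and $\Omega_c^{*}$: the intermediate matrices $H_{k,v}$ appearing in its proof are convex combinations of these two and never involve subsequent iterates. Consequently the induction only ever feeds Lemma \ref{diff_bound} the bounds it has already verified for the current iterate, which severs the circular dependence. As a final remark linking back to Lemma \ref{iterate_bound}, if the initializer obeys $\alpha I \preceq \Omega_c^{(0)} \preceq \beta I$ then $\|\Omega_c^{(0)} - \Omega_c^{*}\|_F \leq \sqrt{p}\,\|\Omega_c^{(0)} - \Omega_c^{*}\|_2 \leq \sqrt{p}(\beta - \alpha)$, which recovers the stated bound $b' \leq \beta + \sqrt{p}(\beta - \alpha)$.
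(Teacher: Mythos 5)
Your proof is correct and follows essentially the same route as the paper's: the one-step contraction of Lemma \ref{diff_bound} (whose factor is at most one under the stated step size) yields $\|\Omega_c^{(k)}-\Omega_c^{*}\|_F\le\|\Omega_c^{(0)}-\Omega_c^{*}\|_F$, and the triangle inequality together with $\|A\|_2\le\|A\|_F$ finishes; you are simply more explicit than the paper about the induction that must carry the lower bound $\alpha I\preceq\Omega_c^{(k)}$ (via Lemma \ref{lower_bound_lemma}) alongside the distance bound, and about why the one-step contraction needs only spectral bounds on $\Omega_c^{(k)}$ and $\Omega_c^{*}$, which dissolves the apparent circularity the paper glosses over. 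The only slip is the chain $\Omega_c^{*}\preceq\beta I\preceq b'I$ --- the inequality $\beta\le b'$ need not hold --- but the fact you actually need, $\Omega_c^{*}\preceq b'I$, is immediate since $\rho_1(\Omega_c^{*})=\|\Omega_c^{*}\|_2\le b'$ by the definition of $b'$.
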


\begin{proof}
Using results from Lemma \ref{optim_bounds} and Lemma \ref{lower_bound_lemma} we have that 
$$
\Lambda_k^{+}=\max\left( \rho_1(\Omega_c^{(k)}),\rho_1(\Omega_c^{*})\right)>\Lambda_k^{-}=\min\left( \rho_p(\Omega_c^{(k)}),\rho_p(\Omega_c^{*})\right) \geq \alpha^2.
$$

Since $t \leq \frac{\alpha^2}{2+\gamma_{c2}\alpha^2+1}$, 
$$
\max\left\lbrace |1-\frac{t}{b^2}|,|1-\frac{t}{a^2}|\right\rbrace \leq 1.
$$
Next, by applying Theorem \ref{iterate_bound} 

$$
\|\Omega_c^{(k)}-\Omega_c^{*}\|_F\leq \|\Omega_c^{(k-1)}-\Omega_c^{(k)}\|_F.
$$

Finally we have that,
$$
\|\Omega_c^{(k)}\|_2-\|\Omega_c^{*}\|_2 \leq \|\Omega_c^{(k)}-\Omega_c^{*}\|_2\leq \|\Omega_c^{(0)}-\Omega_c^{*}\|_F.
$$

Finally we obtain the bound,
$$
\rho_1(\Omega_c^{(k)})\leq \|\Omega_c^{*}\|_2+\|\Omega_c^{(0)}-\Omega_c^{*}\|_F.
$$
\end{proof}

Finally we will formally state the proof of lemma \ref{iterate_bound}.

\begin{proof}
Applying the results of Lemma \ref{lower_bound_lemma} and Lemma \ref{upper_bound_lemma} we have that 

$$
\alpha I \preceq \Omega_c^{(k)} \preceq b'I,
$$
and 
\begin{eqnarray*}
b'&\leq&\|\Omega_c^{*}\|_2+\sqrt{p}\|\Omega_c^{(0)}-\Omega_c^{*}\|_2 \\
&\leq& \beta +\sqrt{p}(\beta-\alpha).
\end{eqnarray*}
\end{proof}

\end{document}